\newcommand{\ang}[1]{\left\langle #1 \right\rangle}
\newcommand{\var}{\operatorname{Var}}
\newcommand{\norm}[1]{\left\lVert #1 \right\rVert}
\newcommand{\opnorm}[1]{\norm{#1}^{\mathsf{op}}}
\newcommand{\C}{\mathbb{C}}
\newcommand{\E}{\mathbb{E}}
\newcommand{\R}{\mathbb{R}}
\newcommand{\Z}{\mathbb{Z}}
\newcommand{\ve}{\varepsilon}
\newcommand{\mbf}[1]{\mathbf{#1}}
\newcommand{\msf}[1]{\mathsf{#1}}
\newtheorem{thm}{Theorem}[section]
\newtheorem{lem}[thm]{Lemma}
\newtheorem{prop}[thm]{Proposition}
\begin{document}

\title{Polynomial Bounds for Learning Noisy Optical Physical Unclonable Functions and Connections to Learning With Errors}

\author{Apollo Albright, Boris Gelfand, and Michael Dixon
\thanks{Apollo Albright, Boris Gelfand, and Michael Dixon are with Los Alamos National Laboratory, Los Alamos, New Mexico (e-mail:  aalbright@lanl.gov; bgelfand@lanl.gov; mdixon@lanl.gov)}
\thanks{Apollo Albright is also with Reed College, 3203 SE Woodstock Blvd, Portland, Oregon 97202 USA}
}
\maketitle

\begin{abstract}
    It is shown that a class of optical physical unclonable functions (PUFs) can be learned to arbitrary precision with arbitrarily high probability, even in the presence of noise, given access to polynomially many challenge-response pairs and polynomially bounded computational power, under mild assumptions about the distributions of the noise and challenge vectors.
    This extends the results of Rh\"uramir et al. (2013), who showed a subset of this class of PUFs to be learnable in polynomial time in the absence of noise, under the assumption that the optics of the PUF were either linear or had negligible nonlinear effects.
    We derive polynomial bounds for the required number of samples and the computational complexity of a linear regression algorithm, based on size parameters of the PUF, the distributions of the challenge and noise vectors, and the probability and accuracy of the regression algorithm, with a similar analysis to one done by Bootle et al. (2018), who demonstrated a learning attack on a poorly implemented version of the Learning With Errors problem.
\end{abstract}

\section{Introduction}
The security of a cryptographic system depends on the security of the keys and encryption mechanisms it uses.
Traditional cryptographic systems that store sensitive or proprietary information in non-volatile memory are susceptible to having this information copied to a malicious machine.
One solution to this problem is to use a physical unclonable function (PUF) \cite{Pappu_2001,CDDD_2002}.
A PUF is a type of one-way physical system characterized by instance-specific random physical properties arising from manufacturing process variations.
A PUF can be probed or challenged with external stimuli to give specific responses, which depend on random variations during the manufacturing process and are ideally impossible to predict or invert without directly interrogating the PUF.
PUFs are often characterized by some form of randomness or disorder inherent in the manufacturing process, which is ideally impossible for any party to reproduce, or clone, exactly.
This unclonability property makes PUFs ideal for technology protection, anti-tamper attestation, and cryptographic protocols such as key generation that require an entropy source for secure random number generation protocols since they cannot be directly copied like digital keys or code stored in non-volatile memory \cite{Pappu_2001, CDDD_2002, SD_2007}.

By sending the PUF a sequence of challenges and checking that it returns the correct responses, one can verify the PUF's integrity.
One measure of the strength of a PUF is the number of challenge-response pairs (CRPs), which are unique pairs $(C,R)$ of challenges $C$ and responses $R$.
A PUF in which the number of CRPs scales polynomially with a security parameter $n$ (which may be the physical size or number of inputs of the system) is classified as ``weak'' since its behavior can be fully determined by polynomial-time read-out attacks, whereas a PUF that has exponentially many CRPs is classified as ``strong'' since it is not vulnerable to these sorts of brute-force attacks \cite{MBW+_2019}.

Many current PUF designs are implemented in electronic circuits and use signal race conditions set by the inherent randomness in silicon manufacturing \cite{MBW+_2019}.
Examples of silicon-based PUFs include the Arbiter PUF \cite{SD_2007,GCD+_2002,LDG+_2004}, Ring Oscillator PUFs \cite{BNCF_2014}, and static random-access memory (SRAM) PUFs \cite{GKST_2007,HBF_2008,MTV_2008}.
Many of these designs, such as the Arbiter PUF and its variants, have been demonstrated to be machine learnable \cite{DLG+_2005, RSS+_2010, TLH+_2015, GKJ+_2015, GTSS_2015, GTFS_2016, GTS_2016, GTFS_2017, Ganji_2018, CMH_2020}.
Once an adversary has a model of the PUF, they can encode it in a separate chip to create a functional copy of it.
In addition, physical clones of SRAM PUFs were created using a focused ion beam circuit edit in \cite{HBNS_2013}, further limiting the application of silicon PUFs that rely on race conditions for implementing secure and unclonable physical cryptographic protocols.

Optical PUFs, first introduced in \cite{Pappu_2001,PRTG_2002}, were one of the first suggested PUF designs.
Optical PUFs consist of an optical medium, typically some kind of resin, with strongly scattering material, such as microscopic glass beads, randomly distributed within.
When coherent laser light hits the medium, it undergoes many scattering events as it passes through the sample, resulting in a noisy image called a speckle pattern on the opposite side.
A challenge for the optical PUF therefore consists of the position and angle of incidence of the laser source, and the response is an image of the speckle pattern.
While the optical PUFs presented in \cite{Pappu_2001, PRTG_2002} were experimentally shown to be resistant to modeling attacks by Support Vector Machines (SVMs) \cite{RHU+_2013}, they are still classified as weak PUFs since they suffer from a polynomially bounded set of CRPs due to the optical structure having nonzero correlation lengths and angles \cite{Pappu_2001}, making very small changes in the orientation of the incident laser result in highly correlated speckle patterns \cite{RHU+_2013}.
The correlation lengths and angles can be reduced greatly by using nonlinear optical media \cite{Pappu_2001, SM_2000}; however the number of CRPs is still polynomially bounded by the precision of the laser alignment system.
Because of this polynomial bound on the number of available CRPs, an adversary can efficiently generate a model of the PUF just by enumerating every possible CRP, regardless of measurement noise.
Furthermore, the original optical PUFs require a very precise token positioning system and are prone to misalignment error, making them somewhat unreliable.

These issues were addressed in \cite{RHU+_2013} with the introduction of integrated optical PUFs.
In the original non-integrated optical PUFs, the relative position of the laser and the scattering medium can be varied as part of the challenge.
In contrast, an integrated optical PUF fixes the relative positions of the laser, the PUF, and the camera.
In order to input different challenges, the authors of \cite{RHU+_2013} propose to send the incoming laser beam through a collimating lens and a spatial light modulator, such as a liquid-crystal display (LCD) mask, allowing parts of the PUF's surface to be selectively illuminated (Fig. \ref{fig:PUF_schematic}).
Thus, a challenge for the integrated PUF in \cite{RHU+_2013} consists of a specific image on the mask, and the response is the corresponding speckle pattern.
Since the number of mask images is exponentially large in the number of pixels, optical PUFs with a mask have exponentially many CRPs, and are thus classified as ``strong''.
Since integrated optical PUFs do not have any moving parts, they are not as reliant on the exact position and angle of the incident laser and are less susceptible to environmental changes than the ones in \cite{Pappu_2001,PRTG_2002}.

\begin{figure}[!t]
    \centering
    \includegraphics[width=0.8\linewidth]{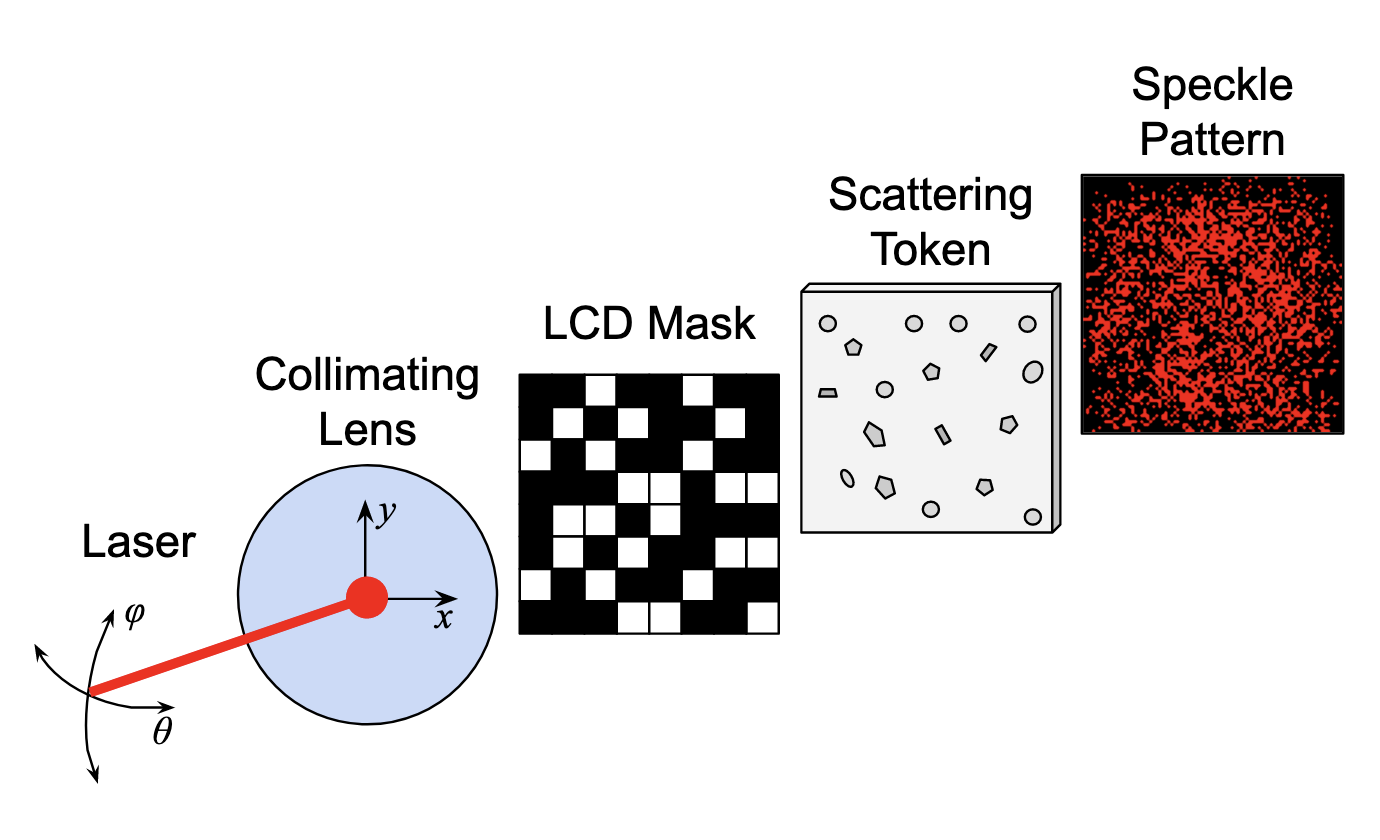}
    \vspace{-0.25cm}
    \caption{A schematic of an optical PUF with a mask.
    By varying the laser's position $(x,y)$ and angle $(\theta,\varphi)$ relative to the scattering token and selecting which blocks of the mask are transparent, one can control which areas of the scattering pattern are illuminated.
    The resulting speckle pattern can be recorded by a camera.
    In the integrated optical PUF design proposed in \cite{RHU+_2013}, the laser's position and angle are fixed, and in the original optical PUF of \cite{Pappu_2001}, which did not feature a mask, the laser hits the scattering token directly.}
    \label{fig:PUF_schematic}
\end{figure}

It was shown in \cite{RHU+_2013} that, in the absence of external noise, integrated optical PUFs using a mask and scattering media with linear optical properties are susceptible to linear regression algorithms since the electric field amplitudes of speckle patterns from different challenges add together linearly.
By generating a basis of the challenge space, it is possible for an adversary to predict the behavior of a linear combination of these basis challenges since the corresponding response will be the same linear combination of the responses.

In this paper, we extend this result to show that optical PUFs with a mask are also learnable in polynomial time when the external noise either has a bounded magnitude or if it follows a subgaussian distribution.
Our analysis based on an a proof in \cite{BDE+_2018} for the solvability of the ``Integer Learning with Errors'' problem, an easier variant of the Learning with Errors (LWE) problem that does not use modular reduction in the field $\Z/p\Z$.
More specifically, in Section \ref{subsec:physics} we examine the physics of the PUF and show that, within a linear optical regime, the responses can be written as a linear function of the challenges.
In Section \ref{sec:learning}, we describe how to reduce the problem of learning an integrated optical PUF (or equivalently, a particular challenge position of a non-integrated optical PUF) with a mask from noisy CRPs to the equivalent problem of solving a polynomially large system of noisy equations.
We prove a polynomial bound for the number of samples required to learn the PUF, based on the number of pixels in the LCD mask, the number of pixels in the output, the distribution of the challenge and noise vectors, the accuracy to which the PUF should be learned, and the desired probability of learning.
We conclude Section \ref{subsec:linear_learning} by expressing this bound asymptotically in Eq. \ref{eq:puf_learnability} and the time complexity of the linear regression algorithm in Eq. \ref{eq:puf_time_complexity}, and we extend this result to include weakly nonlinear regimes in Section \ref{subsec:nonlinear_learning}.
The effects of Kerr nonlinearity on the resistance of optical PUFs to physical cloning attacks was discussed in \cite{Nikolopoulos_2022}, however to our knowledge there have been no studies on learning attacks of nonlinear optical PUFs.
Since the linear regression algorithm runs in polynomial time and produces, with arbitrarily high probability, an arbitrarily good approximation to the PUF, we know these types of optical PUFs are learnable under the probably approximately correct (PAC) framework, which has previously been used to demonstrate the learnability of various other PUF designs \cite{GTSS_2015,GTS_2016,GTFS_2016,GTFS_2017,Ganji_2018,CMH_2020}.
Table~\ref{tab:learnability_results} gives results from the literature as well as our contributions for the learnability of optical PUFs.

\begin{table*}[t]
\begin{center}
    \caption{Learnability results of optical PUFs.}
    \begin{tabular}{|c|c|c||c|c|c|c|c|c|}
    \hline
     \multirow{2}{*}{Design} & \multirow{2}{*}{Illumination} & \multirow{2}{*}{CRP Space} & \multicolumn{2}{c|}{Linear} & \multicolumn{2}{c|}{Weakly Nonlinear} & \multicolumn{2}{c|}{Strongly Nonlinear}\\
    \cline{4-9}
     & & & Noiseless & Noisy & Noiseless & Noisy & Noiseless & Noisy \\ \hhline{|=|=|=#=|=|=|=|=|=|}
     Integrated & No Mask & $1$ & trivial & trivial & trivial & trivial & trivial & trivial \\ \hline
     Non-Integrated & No Mask & $O(\msf{poly}(n))$ & \cite{RHU+_2013, Pappu_2001} & \cite{RHU+_2013, Pappu_2001} & Section \ref{subsec:non-integrated_learning} & Section \ref{subsec:non-integrated_learning} & Section \ref{subsec:non-integrated_learning} & Section \ref{subsec:non-integrated_learning} \\ \hline
     Integrated & Mask & $O(\exp(n))$ & \cite{RHU+_2013} & Section \ref{subsec:linear_learning} & Section \ref{subsec:nonlinear_learning} & Section \ref{subsec:nonlinear_learning} & ? & ? \\ \hline
     Non-Integrated & Mask & $O(\exp(n))$ & Section \ref{subsec:linear_learning} & Section \ref{subsec:linear_learning} & Section \ref{subsec:nonlinear_learning} & Section \ref{subsec:nonlinear_learning} & ? & ? \\ \hline
    \end{tabular}
    \label{tab:learnability_results}
\end{center}
\end{table*}

\section{Preliminaries}
\label{sec:prelim}
\subsection{Notation}
\label{subsec:notation}
For a vector $\mbf{x}\in\R^n$, the $p$-norm $\norm{\mbf{x}}_p$ of $\mbf{x}$, for $p\geq 1$ is given by $\norm{\mbf{x}}_p=(|x_1|^p+\cdots+|x_n|^p)^{1/p}$.
Unless otherwise stated, $\norm{\mbf{x}}$ will always refer to the Euclidean norm $\norm{\mbf{x}}_2$.
For a matrix $\mbf{A}\in\R^{m\times n}$, the operator norm $\opnorm{\mbf{A}}$ is given by
\[
\opnorm{\mbf{A}} \;=\; \sup_{\norm{\mbf{x}}\,=\,1}\norm{\mbf{A}\mbf{x}}.
\]
We denote the maximum real eigenvalue of a square matrix $\mbf{A}$ by $\lambda_{\max}(\mbf{A})$, and similarly $\lambda_{\min}(\mbf{A})$ denotes the minimum real eigenvalue.
The transpose of a matrix $\mbf{A}$ is written as $\mbf{A}^\msf{T}$.
With this in mind, the operator norm of $\mbf{A}$ can be expressed as its largest singular value,
\begin{equation}
    \label{eq:opnorm}
    \opnorm{\mbf{A}} \;=\; \sqrt{\lambda_{\max}\left(\mbf{A}\mbf{A}^\msf{T}\right)}.
\end{equation}
We write $X\sim\chi$ to say a random variable $X$ is sampled according to a distribution $\chi$.
The expectation of $X$ is denoted $\E[X]$ and its variance $\var(X)=\E[X^2]-\E[X]^2$.
We denote by $\Pr[Y]$ the probability of event $Y$.

\subsection{Subgaussian Probabilitiy Distributions}
\label{subsec:prob_lemmas}

A variable $X$ is called $\tau$-subgaussian for some $\tau>0$ if for all $s\in\R$,
\[
\E\left[\exp(sX)\right]\leq\exp\left(\frac{\tau^2s^2}{2}\right).
\]
Subgaussian random variables are very useful for our analysis since they are subject to very strong tail bounds (at least as strong as those for a Gaussian distribution).
The following lemmas describe useful properties of subgaussian distributions, and they will be used in Section \ref{sec:learning} to bound the error an adversary would have when trying to learn the behavior of the PUF.
The proofs for Lemmas \ref{lem:subgaussian_properties}, \ref{lem:subgauss_def}, \ref{lem:subgauss_sum}, and \ref{lem:subgaussian_matrix} can be found in \cite{BDE+_2018}.

\begin{lem}[\cite{BLM_2013}, Lemma 2.2]
\label{lem:hoeffding}
Any distribution over $\R$ with mean zero and supported over a bounded interval $[-a, a]$ is $a$-subgaussian.
\end{lem}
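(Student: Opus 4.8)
The plan is to prove this via the convexity argument underlying the classical Hoeffding lemma. Let $X$ be a mean-zero random variable supported on $[-a,a]$, and fix $s\in\R$; the goal is to show $\E[\exp(sX)]\leq\exp(a^2s^2/2)$.

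First I would exploit the convexity of $t\mapsto\exp(st)$. For any $x\in[-a,a]$, write $x$ as the convex combination $x=\frac{a-x}{2a}(-a)+\frac{a+x}{2a}(a)$, so that convexity gives
\[
\exp(sx)\;\leq\;\frac{a-x}{2a}\exp(-sa)+\frac{a+x}{2a}\exp(sa).
\]
Taking expectations and using $\E[X]=0$ to kill the terms linear in $x$, this collapses to $\E[\exp(sX)]\leq\tfrac12\exp(-sa)+\tfrac12\exp(sa)=\cosh(sa)$.

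The remaining step is the scalar inequality $\cosh(t)\leq\exp(t^2/2)$, applied with $t=sa$. I would prove this by comparing power series term by term: $\cosh(t)=\sum_{k\geq0}t^{2k}/(2k)!$ while $\exp(t^2/2)=\sum_{k\geq0}t^{2k}/(2^kk!)$, and $(2k)!=(2^kk!)\cdot\bigl((k+1)(k+2)\cdots(2k)/2^k\bigr)\cdot 2^k\geq 2^kk!$ since $(k+1)\cdots(2k)\geq 2^k$ for every $k\geq0$. Hence each coefficient on the left is dominated by the corresponding one on the right, giving $\cosh(t)\leq\exp(t^2/2)$ and therefore $\E[\exp(sX)]\leq\exp(a^2s^2/2)$, which is exactly the $a$-subgaussian condition.

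I do not expect a genuine obstacle here; the only mildly delicate point is justifying $\cosh(t)\leq\exp(t^2/2)$, and the series comparison disposes of it cleanly. As an alternative one could run the cumulant-generating-function argument: set $\psi(s)=\log\E[\exp(sX)]$, note $\psi(0)=\psi'(0)=0$ (the latter because $\E[X]=0$), observe that $\psi''(s)$ is the variance of $X$ under the exponentially tilted law, which is again supported on $[-a,a]$ and so has variance at most $a^2$, and conclude by Taylor's theorem with Lagrange remainder that $\psi(s)\leq\tfrac{s^2}{2}a^2$. Either route yields the stated bound.
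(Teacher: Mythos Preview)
Your argument is correct. The paper does not actually prove this lemma; it simply cites \cite{BLM_2013}, Lemma~2.2 (Hoeffding's lemma) without reproducing a proof. Your convexity-plus-$\cosh$ route is one of the two standard proofs and is entirely sound; the one minor cosmetic point is that your intermediate factorization of $(2k)!$ is written a bit awkwardly, but the inequality you actually need and state, $(k+1)(k+2)\cdots(2k)\geq 2^k$, is correct and suffices. The alternative you sketch at the end---bounding $\psi''(s)$ by the maximal variance $a^2$ of a distribution supported on $[-a,a]$ and integrating twice---is in fact the argument that \cite{BLM_2013} presents, so either of your two approaches matches the cited source.
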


\begin{lem}[\cite{BDE+_2018}, Lemma 2.4]
\label{lem:subgaussian_properties}
A $\tau$-subgaussian random variable $X$ has the following properties:
\[
\E[X]=0 \quad\textrm{and}\quad \E[X^2]\leq\tau^2.
\]
\end{lem}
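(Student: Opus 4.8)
The plan is to read off both conclusions from the defining inequality $\E[\exp(sX)]\leq\exp(\tau^2 s^2/2)$ by comparing behavior near $s=0$. As a preliminary, I would observe that the hypothesis makes the moment generating function $M(s):=\E[\exp(sX)]$ finite for every $s\in\R$; since $\exp(x)+\exp(-x)$ dominates $|x|$, and more generally $|x|^k$ is dominated by a constant times $\exp(|x|)$, finiteness of $M$ on a neighborhood of the origin already forces $\E[|X|]<\infty$ (indeed all moments are finite), so the quantities $\E[X]$ and $\E[X^2]$ in the statement are well defined.

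For the first claim I would apply Jensen's inequality to the convex function $t\mapsto\exp(st)$, which gives $\exp(s\E[X])\leq\E[\exp(sX)]\leq\exp(\tau^2 s^2/2)$, hence $s\,\E[X]\leq \tau^2 s^2/2$ for all $s\in\R$. Dividing by $s>0$ and letting $s\to 0^{+}$ yields $\E[X]\leq 0$, while dividing by $s<0$ and letting $s\to 0^{-}$ yields $\E[X]\geq 0$; therefore $\E[X]=0$.

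For the second claim I would use the elementary pointwise inequality $\exp(y)\geq 1+y$, which makes the random variable $Z_s:=\frac{2(\exp(sX)-1-sX)}{s^2}$ nonnegative for every $s\neq 0$, with $Z_s\to X^2$ pointwise as $s\to 0$. Applying Fatou's lemma and then using $\E[X]=0$ together with the subgaussian bound gives
\[
\E[X^2]\;\leq\;\liminf_{s\to 0}\E[Z_s]\;=\;\liminf_{s\to 0}\frac{2\bigl(M(s)-1\bigr)}{s^2}\;\leq\;\liminf_{s\to 0}\frac{2\bigl(\exp(\tau^2 s^2/2)-1\bigr)}{s^2}\;=\;\tau^2,
\]
which is the desired bound. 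The only genuine subtlety is the passage from the inequality on $M(s)$ to an inequality on the moments, i.e. the interchange of limit and expectation; the point of routing the second moment through Fatou's lemma rather than through term-by-term differentiation of $M$ is precisely that the nonnegativity of $\exp(y)-1-y$ makes this interchange automatic and sidesteps having to justify analyticity of the moment generating function.
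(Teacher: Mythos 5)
Your proof is correct, and it is worth noting that the paper does not actually prove this lemma itself: it defers to \cite{BDE+_2018}, where the standard argument is a second-order comparison of the two sides of the defining inequality near $s=0$, i.e.\ expanding $\E[\exp(sX)]=1+s\,\E[X]+\tfrac{s^2}{2}\E[X^2]+o(s^2)$ against $\exp(\tau^2s^2/2)=1+\tfrac{\tau^2s^2}{2}+o(s^2)$ and reading off the first- and second-order coefficients. Your route reaches the same two conclusions but distributes the work differently: Jensen's inequality handles the mean without any expansion of $M(s)$ at all, and the second moment is extracted via Fatou applied to the nonnegative quantities $Z_s=2(\exp(sX)-1-sX)/s^2$, which (as you say) makes the limit--expectation interchange automatic and avoids having to justify differentiating $M$ under the expectation or controlling the $o(s^2)$ remainder uniformly --- the one point the Taylor-expansion proof typically glosses over. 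The price is only the small preliminary check that $\E[|X|]<\infty$ so that $\E[Z_s]$ splits as $2(M(s)-1-s\E[X])/s^2$, which you supply correctly from finiteness of the moment generating function; the passage $\liminf_{s\to 0}$ should formally be read along an arbitrary sequence $s_n\to 0$ to invoke Fatou, but that is cosmetic. Both arguments are elementary and of comparable length; yours is the more self-contained and measure-theoretically careful of the two.
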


\begin{lem}[\cite{BDE+_2018}, Lemma 2.6]
\label{lem:subgauss_def}
Let $X$ be a $\tau$-subgaussian random variable.
Then for all $t>0$,
\begin{equation}
    \label{eq:subgauss_def}
    \Pr[X>t] \leq \exp\left(-\frac{t^2}{2\tau^2}\right).
\end{equation}
\end{lem}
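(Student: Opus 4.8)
The plan is to apply the exponential Markov inequality (a Chernoff-type bound) and then optimize over a free parameter. First I would fix an arbitrary $s > 0$ and observe that, since $x \mapsto e^{sx}$ is increasing, the event $\{X > t\}$ is contained in $\{e^{sX} \geq e^{st}\}$; applying Markov's inequality to the nonnegative random variable $e^{sX}$ then gives
\[
\Pr[X > t] \;\leq\; \Pr\!\left[e^{sX} \geq e^{st}\right] \;\leq\; e^{-st}\,\E\!\left[e^{sX}\right].
\]
Next I would invoke the defining inequality of $\tau$-subgaussianity, namely $\E[\exp(sX)] \leq \exp(\tau^2 s^2 / 2)$, which holds for every $s \in \R$ and in particular for our chosen $s$, to conclude
\[
\Pr[X > t] \;\leq\; \exp\!\left(\frac{\tau^2 s^2}{2} - st\right).
\]

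The remaining step is to choose $s$ so as to make this bound as tight as possible. Since the inequality holds for all $s > 0$, I would minimize the exponent $g(s) = \tfrac{1}{2}\tau^2 s^2 - st$; as $g$ is a convex quadratic with $g'(s) = \tau^2 s - t$, its minimizer is $s^\star = t / \tau^2$, which is strictly positive because $t > 0$ and $\tau > 0$ and is therefore admissible. Substituting $s^\star$ yields $g(s^\star) = \tfrac{1}{2}\tau^2 (t/\tau^2)^2 - t\cdot(t/\tau^2) = t^2/(2\tau^2) - t^2/\tau^2 = -t^2/(2\tau^2)$, which gives precisely $\Pr[X > t] \leq \exp(-t^2/(2\tau^2))$, as claimed.

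There is no genuine obstacle here — this is the textbook Chernoff bound — so the main point requiring care is merely bookkeeping: verifying that the optimal $s^\star$ lies in the allowed range $s > 0$, which is guaranteed by the hypotheses $t > 0$ and $\tau > 0$. One could bypass the optimization by simply plugging $s = t/\tau^2$ into the intermediate bound without motivation, but carrying out the minimization explicitly makes clear both that this is the best choice available and that the exponent cannot be improved by this method.
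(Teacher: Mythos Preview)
Your proof is correct and is exactly the standard Chernoff-bound argument; the paper itself does not give a proof but defers to \cite{BDE+_2018}, where the same exponential Markov inequality followed by optimization over $s$ is used. There is nothing to add.
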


\begin{lem}[\cite{BDE+_2018}, Lemma 2.7]
\label{lem:subgauss_sum}
Let $X_1,\ldots,X_n$ be independent random variables such that $X_i$ is $\tau_i$-subgaussian.
For all $\mu_1,\ldots,\mu_n\in\R$, the random variable $X=\mu_1X_1+\cdots+\mu_nX_n$ is $\tau$-subgaussian, where
\[
\tau^2 = \mu_1^2\tau_1^2 + \cdots + \mu_n^2\tau_n^2.
\]
\end{lem}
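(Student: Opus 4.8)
The plan is to work directly with the moment generating function (MGF), exploiting independence to factor it and then applying the single-variable subgaussian bound to each factor with a rescaled argument.

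First I would fix an arbitrary $s\in\R$ and write
\[
\E\left[\exp(sX)\right] \;=\; \E\left[\exp\left(s\sum_{i=1}^n \mu_i X_i\right)\right] \;=\; \E\left[\prod_{i=1}^n \exp(s\mu_i X_i)\right].
\]
Since $X_1,\ldots,X_n$ are independent, the variables $\exp(s\mu_i X_i)$ are independent as well, so the expectation of the product factors into the product of expectations:
\[
\E\left[\exp(sX)\right] \;=\; \prod_{i=1}^n \E\left[\exp\left((s\mu_i) X_i\right)\right].
\]

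Next, for each $i$ I would invoke the defining inequality of a $\tau_i$-subgaussian variable, applied with the real scalar $s\mu_i$ in place of $s$, to get
\[
\E\left[\exp\left((s\mu_i)X_i\right)\right] \;\leq\; \exp\left(\frac{\tau_i^2 (s\mu_i)^2}{2}\right) \;=\; \exp\left(\frac{\mu_i^2\tau_i^2 s^2}{2}\right).
\]
Multiplying these bounds over $i$ and collecting exponents yields
\[
\E\left[\exp(sX)\right] \;\leq\; \prod_{i=1}^n \exp\left(\frac{\mu_i^2\tau_i^2 s^2}{2}\right) \;=\; \exp\left(\frac{s^2}{2}\sum_{i=1}^n \mu_i^2\tau_i^2\right) \;=\; \exp\left(\frac{\tau^2 s^2}{2}\right),
\]
which is precisely the statement that $X$ is $\tau$-subgaussian, since $s$ was arbitrary.

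There is no substantive obstacle here; the only point requiring a little care is that the subgaussian bound must be available for all real arguments, including zero and negatives, so that substituting $s\mu_i$ is legitimate irrespective of the sign of $\mu_i$. The definition in Section \ref{subsec:prob_lemmas} does quantify over all $s\in\R$, so this is fine. One could alternatively cite this as the standard closure of subgaussian families under independent linear combinations, but the MGF computation above is self-contained and short enough to give in full.
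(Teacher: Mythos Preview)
Your proof is correct and is exactly the standard MGF-factorization argument. The paper does not prove this lemma inline but defers to \cite{BDE+_2018}, whose proof is the same computation you wrote out.
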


A random vector $\mbf{x}\in\R^n$ is called $\tau$-subgaussian if for all unit vectors $\mbf{u}\in\R^n$, the inner product $\ang{\mbf{u},\mbf{x}}$ is a $\tau$-subgaussian random variable.
By this definition, a random vector $\mbf{x}$ that has components $x_i$ that are all independent $\tau$-subgaussian random variables is $\tau$-subgaussian.
Similarly to subgaussian random variables, subgaussian vectors also have strong tail bounds.

\begin{lem}
\label{lem:subgaussian_vector_bounds}
Let $\mbf{v}$ be a $\tau$-subgaussian random vector in $\R^n$.
Then
\[
\Pr[\norm{\mbf{v}} \geq t] \leq 2n\exp\left(-\frac{t^2}{2\tau^2 n}\right).
\]
\end{lem}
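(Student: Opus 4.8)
The plan is to reduce the bound on the Euclidean norm $\norm{\mbf{v}}$ to a statement about the coordinates $v_i = \ang{\mbf{e}_i, \mbf{v}}$, each of which is a $\tau$-subgaussian random variable by the definition of a subgaussian vector applied to the standard basis unit vectors $\mbf{e}_1,\ldots,\mbf{e}_n$. The key observation is that if $\norm{\mbf{v}} \geq t$, then at least one coordinate must be reasonably large: since $\norm{\mbf{v}}^2 = \sum_{i=1}^n v_i^2$, the event $\norm{\mbf{v}} \geq t$ implies $\max_i |v_i| \geq t/\sqrt{n}$, because otherwise every $v_i^2 < t^2/n$ and the sum would be strictly less than $t^2$. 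So I would write
\[
\Pr[\norm{\mbf{v}} \geq t] \;\leq\; \Pr\!\left[\exists\, i : |v_i| \geq \frac{t}{\sqrt{n}}\right] \;\leq\; \sum_{i=1}^n \Pr\!\left[|v_i| \geq \frac{t}{\sqrt{n}}\right],
\]
using a union bound in the last step.

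Next I would control each term $\Pr[|v_i| \geq t/\sqrt{n}]$ using the subgaussian tail bound. Lemma \ref{lem:subgauss_def} gives the one-sided bound $\Pr[v_i > s] \leq \exp(-s^2/(2\tau^2))$ for $s > 0$; applying it to both $v_i$ and $-v_i$ (which is also $\tau$-subgaussian, e.g.\ by Lemma \ref{lem:subgauss_sum} with the single coefficient $\mu_1 = -1$) yields
\[
\Pr\!\left[|v_i| \geq \frac{t}{\sqrt{n}}\right] \;\leq\; 2\exp\!\left(-\frac{t^2}{2\tau^2 n}\right).
\]
Summing this over the $n$ coordinates gives exactly $2n\exp(-t^2/(2\tau^2 n))$, which is the claimed bound.

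I do not expect any serious obstacle here; the argument is a routine max-versus-$\ell_2$-norm comparison followed by a union bound and the coordinatewise subgaussian tail estimate. The only mild subtlety worth stating carefully is that the definition of a $\tau$-subgaussian vector quantifies over \emph{all} unit vectors, so each coordinate projection $\ang{\mbf{e}_i,\mbf{v}}$ is automatically $\tau$-subgaussian with the same parameter $\tau$ — there is no loss in the constant. One could alternatively apply the definition directly with a random or extremal unit vector, but the coordinate approach keeps the union bound clean and matches the factor of $2n$ in the statement. If a slightly tighter bound were desired one could replace the union over coordinates with a net argument over the sphere, but that is unnecessary for the polynomial sample-complexity estimates this lemma feeds into later.
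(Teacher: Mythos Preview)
Your proof is correct and follows essentially the same approach as the paper: reduce $\norm{\mbf{v}}\geq t$ to $\max_i|v_i|\geq t/\sqrt{n}$, apply a union bound over the $n$ coordinates, and use the one-sided subgaussian tail (Lemma~\ref{lem:subgauss_def}) on each of $v_i$ and $-v_i$. The only cosmetic difference is that the paper justifies the subgaussianity of $-v_i$ directly from the vector definition via the unit vector $-e_i$, whereas you invoke Lemma~\ref{lem:subgauss_sum} with $\mu_1=-1$; both are equally valid.
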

\begin{proof}
$\norm{\mbf{v}} \geq t$ only if at least one of its components $v_i$ satisfies $|v_i|\geq t/\sqrt{n}$.
However, $v_i$ can be written as the inner product $\ang{\mbf{v},e_i}$, where $e_i$ is the $i$-th standard basis vector.
Similarly, $-v_i = \ang{\mbf{v},-e_i}$.
Since the standard basis vectors are unit vectors in $\R^n$, and since $\mbf{v}$ is $\tau$-subgaussian, this means that each of the components $v_1,\ldots, v_n, -v_1,\ldots, -v_n$ is $\tau$-subgaussian.
Fixing $s=t/\sqrt{n}$, we can use Eq. \ref{eq:subgauss_def} to get
\begin{align*}
    \Pr[\norm{\mbf{v}} \geq t] &\leq \Pr\big[|v_1|\geq s\big] + \cdots + \Pr\big[|v_n| \geq s\big]\\
    & \leq 2n\exp\left(-\frac{t^2}{2\tau^2 n}\right).
\end{align*}
\end{proof}

\begin{lem}[\cite{BDE+_2018}, Lemma 2.9]
\label{lem:subgaussian_matrix}
Let $\mbf{x}$ be a $\tau$-subgaussian random vector in $\R^n$ and $\mbf{A}\in\R^{m\times n}$.
Then $\mbf{y}=\mbf{A}\mbf{x}$ is a $\tau'$-subgaussian random vector in $\R^m$, with $\tau'=\tau\cdot\opnorm{\mbf{A}^\msf{T}}$.
\end{lem}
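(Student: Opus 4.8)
The plan is to reduce the vector statement to the one-dimensional definition of subgaussianity. Fix an arbitrary unit vector $\mbf{u}\in\R^m$; by definition it suffices to show that $\ang{\mbf{u},\mbf{y}} = \ang{\mbf{u},\mbf{A}\mbf{x}}$ is a $\tau'$-subgaussian random variable. The first step is the adjoint identity $\ang{\mbf{u},\mbf{A}\mbf{x}} = \ang{\mbf{A}^\msf{T}\mbf{u},\mbf{x}}$, which moves the matrix onto the deterministic side and leaves a linear functional of $\mbf{x}$.

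Next I would set $\mbf{w} = \mbf{A}^\msf{T}\mbf{u}$. If $\mbf{w} = \mbf{0}$, then $\ang{\mbf{w},\mbf{x}}$ is identically $0$, whose moment generating function is $1\leq\exp(\tau'^2 s^2/2)$, so it is $\tau'$-subgaussian trivially. Otherwise write $\mbf{w} = \norm{\mbf{w}}\,\hat{\mbf{w}}$ with $\hat{\mbf{w}} = \mbf{w}/\norm{\mbf{w}}$ a unit vector, so that $\ang{\mbf{w},\mbf{x}} = \norm{\mbf{w}}\ang{\hat{\mbf{w}},\mbf{x}}$. Since $\mbf{x}$ is $\tau$-subgaussian and $\hat{\mbf{w}}$ is a unit vector, $\ang{\hat{\mbf{w}},\mbf{x}}$ is $\tau$-subgaussian, and applying Lemma \ref{lem:subgauss_sum} with a single summand (taking $n=1$ and $\mu_1=\norm{\mbf{w}}$) shows the scaled variable $\ang{\mbf{w},\mbf{x}}$ is $\norm{\mbf{w}}\tau$-subgaussian.

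Finally I would bound the parameter via the operator norm: $\norm{\mbf{w}} = \norm{\mbf{A}^\msf{T}\mbf{u}} \leq \opnorm{\mbf{A}^\msf{T}}\,\norm{\mbf{u}} = \opnorm{\mbf{A}^\msf{T}}$ because $\mbf{u}$ is a unit vector, so $\ang{\mbf{u},\mbf{y}}$ is $c$-subgaussian with $c\leq\tau\opnorm{\mbf{A}^\msf{T}}=\tau'$. Inspecting the moment generating function definition, being $c$-subgaussian for some $c\leq\tau'$ implies being $\tau'$-subgaussian, since $\exp(c^2 s^2/2)\leq\exp(\tau'^2 s^2/2)$ for all $s\in\R$. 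As $\mbf{u}$ was arbitrary, $\mbf{y}$ is $\tau'$-subgaussian. I do not expect a genuine obstacle here; the only points needing care are the degenerate case $\mbf{A}^\msf{T}\mbf{u} = \mbf{0}$ and the monotonicity remark that a subgaussian parameter can always be enlarged, both of which are immediate from the moment generating function definition.
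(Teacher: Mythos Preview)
Your argument is correct. The paper does not give its own proof of this lemma---it simply cites \cite{BDE+_2018}, Lemma 2.9---and the route you take (the adjoint identity $\ang{\mbf{u},\mbf{A}\mbf{x}}=\ang{\mbf{A}^\msf{T}\mbf{u},\mbf{x}}$, normalization of $\mbf{A}^\msf{T}\mbf{u}$, the operator-norm bound $\norm{\mbf{A}^\msf{T}\mbf{u}}\leq\opnorm{\mbf{A}^\msf{T}}$, and monotonicity of the subgaussian parameter) is exactly the standard proof one finds in that reference.
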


\subsection{Physics of the PUF}
\label{subsec:physics}
In the absence of nonlinear optical effects, the behavior of the PUF is governed by the linear wave equation
\begin{equation}
    \label{eq:linear_wave}
    \left[\nabla^2 -\frac{1}{c^2}\frac{\partial^2}{\partial t^2}\ve(\mbf{r})\right]\Psi(\mbf{r},t)=J(\mbf{r},t),
\end{equation}
where $\ve(\mbf{r})$ is the dielectric of the scattering token at a position $\mbf{r}$, which encodes values of the dielectric of the glass beads used as scatterers, as well as the dielectric inside the optical resin \cite{SM_2000}.
The resin and the scatterers are both assumed to be locally isotropic, meaning that their dielectric coefficients are independent of the direction of polarization.
$\Psi(\mbf{r},t)$ is a complex scalar field which encodes the amplitude and phase of the electric field at a position $(\mbf{r})$ and a time $t$.
Finally, $J(\mbf{r},t)$ is a monochromatic source term such that $J(\mbf{r},t) = J_0(\mbf{r})\exp(-i\omega_0t)$ and $\Psi(\mbf{r},t) = \psi(\mbf{r})\exp(-i\omega_0t)$.
Eq. \ref{eq:linear_wave} can then be rewritten as
\begin{align*}
    \left[\nabla^2 + \frac{\omega_0^2}{c^2}\ve(\mbf{r})\right]\psi(\mbf{r}) &= J_0(\mbf{r}),
\end{align*}
where $J_0(\mbf{r})$ is the amplitude of the source term at a given location, $\psi(\mbf{r})$ is the amplitude of the electric field, and $\omega_0$ is the angular frequency of the source.
Given the linearity of Eq. \ref{eq:linear_wave}, if the PUF receives challenges $c_1$ and $c_2$ and gives responses $r_1$ and $r_2$, respectively, then if it receives the challenge $c_1+c_2$, the corresponding response will be $r_1+r_2$.

Nonlinear optical effects occur in all optical media, but they are usually insignificant if the magnitude of the electromagnetic field is much smaller than the fields within the molecules and atoms of the material.
When incident light is of a sufficient intensity in a nonlinear medium, the polarization of the medium begins to depend non-linearly on the electromagnetic fields.
For media that are locally isotropic, this nonlinearity means the index of refraction depends on the intensity of the transmitted electromagnetic fields \cite{new_2011}.
This gives the nonlinear wave equation
\[
\left[\nabla^2 + \frac{\omega_0^2}{c^2}\ve\left(\mbf{r},|\psi(\mbf{r})|^2\right)\right]\psi(\mbf{r}) = J_0(\mbf{r}).
\]
In general, $\ve$ can be written as a power series in the field intensity $|\psi(\mbf{r})|^2$.
The nonlinear wave equation can thus be rewritten according to \cite{SM_2000, Eaton_1991} as
\begin{equation}
    \label{eq:nonlinear_wave}
    \left[\nabla^2 + \frac{\omega_0^2}{c^2}\sum_{k=0}^\infty\ve_k(\mbf{r})|\psi(\mbf{r})|^{2k}\right]\psi(\mbf{r}) = J_0(\mbf{r}).
\end{equation}
In the limit as the nonlinear effects go to 0, such as if the medium has weak nonlinear properties or if the laser in the PUF is being run at lower intensities such that all the nonlinear effects are small, the nonlinear component can be truncated after the $\ve_0(\mbf{r})$ term, and Eq. \ref{eq:nonlinear_wave} is equivalent to Eq. \ref{eq:linear_wave}.
For stronger nonlinearity or very high laser intensities, more terms of the power series are necessary, though the nonlinear terms are small corrections except for in very extreme cases, as each successive $\ve_k$ term is typically much smaller than the one before it \cite{new_2011, Eaton_1991}.

\subsection{Learning With Errors}
\label{subsec:lwe}
Learning With Errors (LWE) is a computational problem that has been used as a basis for the security of various candidate post-quantum encryption schemes in lattice-based cryptography \cite{Regev_2005, LPV_2013, BNM+_2022}.
In LWE, one is tasked with learning a secret vector $\mbf{s}\in\Z_p^n$ given polynomially many pairs $(\mbf{a}_i, b_i)\in \Z_p^{n+1}$, where $b_i = \ang{\mbf{a}_i,\mbf{s}}+e_i\mod p$, the $\mbf{a}_i$ are uniformly distributed in $\Z_p^n$, and the $e_i$ are sampled from a discrete Gaussian distribution on $\Z_p$.
It was shown in \cite{Regev_2005} that properly parameterized LWE is at least as hard as several worst-case variants of lattice problems such as the Shortest Independent Vectors Problem (SIVP), and the Gap Shortest Vector Problem (GapSVP), which are conjectured to be hard for both classical and quantum computers.

Continuous Learning With Errors (CLWE) was introduced in \cite{BRST_2021} as a continuous variant of LWE, with quantum reductions from the same lattice problems (SIVP, GapSVP, etc.) that underlie the hardness of LWE.
Later, the authors \cite{GVV_2022} demonstrated polynomial-time reductions between LWE and CLWE, showing that the two problems are equivalently hard.
In CLWE$_{\beta,\gamma}$, for parameters appropriate $\beta,\gamma > 0$, one needs to find a secret unit vector $\mbf{s}\in \R^n$ given polynomially many pairs of the form $(\mbf{a}_i,\mbf{b}_i)\in\R^{n+1}$, where $b_i = \gamma\ang{\mbf{a}_i, \mbf{s}} + e_i\mod 1$, the $\mbf{a}_i$ are distributed according to a continuous Gaussian distribution in $\R^n$ with covariance matrix $I_n/(2\pi)$, and the error terms $e_i$ are sampled from a continuous Gaussian distribution on $\R$ with variance $\beta^2/(2\pi)$.

\subsection{PAC-Learning}
\label{subsec:PAC-learning}
The Probably Approximately Correct (PAC) framework is a general model for evaluating the learnability of classes of functions first described in \cite{Valiant_1984}.
The general idea behind PAC learning is that in order to successfully learn a target concept or function, one should, with high probability, produce a hypothesis that is a good approximation of the target concept.
PAC learning has previously been used to prove the theoretical learnability of various PUF designs \cite{GTSS_2015,GTS_2016,GTFS_2016,GTFS_2017,Ganji_2018,CMH_2020}.
In this work, we use the agnostic PAC framework described in \cite{MRT_2018} to define PAC-learnability as follows:

A class of functions $\mathcal{H}:X\to Y$, called the hypothesis class, is said to be \textit{PAC-learnable} if there exists an algorithm $\mathcal{A}$ such that, for all $\ve > 0$ and $\delta\in (0,1)$, and any target concept $h_0\in H$, then with a set $S$ of $m=O(\mathsf{poly}(1/\ve,1/\delta,n))$ samples drawn according to a distribution $\mathcal{D}$ on $X\times Y$, the algorithm $\mathcal{A}$ will output a hypothesis $h_S:X\to Y$ such that
\[
\Pr_{S\sim \mathcal{D}}\left[R(h_S)-\inf_{h\in \mathcal{H}}R(h)\leq\ve\right]\geq 1-\delta,
\]
according to some generalization error, or risk function $R$.
If the algorithm also terminates in $O(\mathsf{poly}(1/\ve,1/\delta,n))$ time, then it is called an efficient PAC learning algorithm.

In our case, since we want to learn PUFs that essentially encode linear systems, the functions in the hypothesis class are just linear functions in $n$ variables.
Since linear functions in can be encoded as inner products of coefficient vectors $\mbf{h}$ and variable vectors $\mbf{x}$, we will set the risk function $R(h)$ to be the maximum difference between the value $\ang{\mbf{h},\mbf{x}}$ of the hypothesis function and $\ang{\mbf{h}_0,\mbf{x}}$, the value of the target concept.
Thus, the PAC condition can be rewritten as
\begin{equation}
    \label{eq:pac_definition}
    \Pr_{S\sim \mathcal{D}}\left[\max_{\mbf{x}\in X}\left|\ang{\mbf{h}-\mbf{h}_0,\mbf{x}}\right|\leq\ve\right]\geq 1-\delta.
\end{equation}
As we will show in Section \ref{sec:learning}, a simple linear regression algorithm can provably efficiently PAC-learn the PUF, under the mild assumption that the error distribution is subgaussian or can be shifted by a constant offset to produce a subgaussian distribution.

In order for a PUF design to be secure against polynomially bounded adversaries, it cannot be efficiently PAC-learned.
In other words, any algorithm that satisfies the PAC condition should either require exponentially many (in $1/\ve$, $1/\delta$, or $n$) samples or terminate after an exponentially long time.
As mentioned in Section \ref{subsec:lwe}, appropriately parameterized LWE and CLWE are conjectured to be hard to solve under hardness assumptions for worst-case lattice problems \cite{Regev_2005, BRST_2021}.
Thus, under those hardness assumptions, they cannot be efficiently PAC-learned since any algorithm that could efficiently PAC-learn LWE or CLWE would be able to solve those worst-case lattice problems in polynomial time.

\section{Learning Optical PUF behavior}
\label{sec:learning}
Throughout this section, we will assume that the distribution of measurement noise in the PUF responses is subgaussian.
Any nonzero mean in the noise terms will appear as a constant term that can be discarded at the end of the learning algorithm.
If the noise is sampled from a distribution with unbounded support, we can choose to reject samples with too large of noise.
By forcing all the responses to have bounded noise, Lemma \ref{lem:hoeffding} ensures that the noise distribution either is subgaussian or can be shifted by a constant offset to give a subgaussian distribution.

In Section \ref{subsec:nonlinear_learning}, we perform a perturbative analysis for the PUF responses within a weakly nonlinear regime, where terms of quadratic and higher order in the nonlinear correction are considered negligible.
This type of analysis implicitly assumes that the PUF responses are dominated by linear effects, with only a few low-degree nonlinear terms that make up a small correction.
This is true for optical PUFs containing lasers of low power or using materials that have weak nonlinear optical properties, such that the magnitude of the optical electromagnetic field from the laser is much smaller than the fields within the molecules and atoms of the material, and thus can be treated as a small perturbation to the linear behavior \cite{new_2011}.

\subsection{Learning Non-Integrated Optical PUFs}
\label{subsec:non-integrated_learning}

\begin{figure}[!t]
    \centering
    \includegraphics[width=0.5\linewidth]{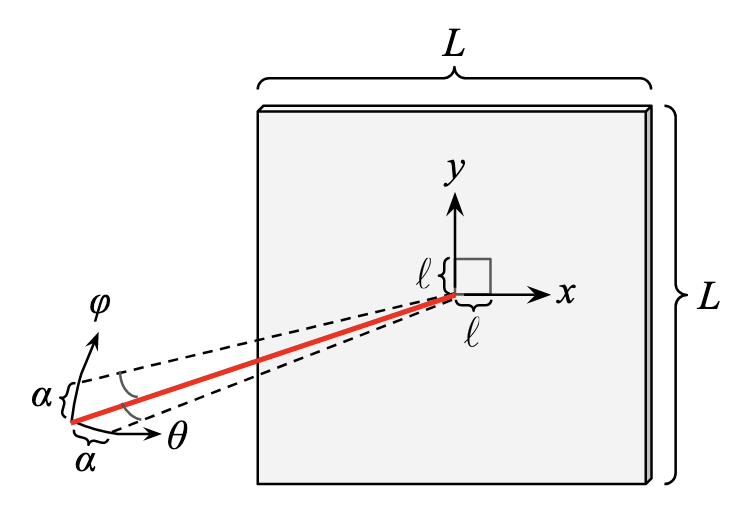}
    \vspace{-0.25cm}
    \caption{In a non-integrated optical PUF, the laser's position $(x,y)$ and direction of incidence $(\theta,\phi)$ can be varied as aprt of the challenge.
    Positional ($\pm \ell$) and angular ($\pm\alpha$) uncertainty in the alignment system means that the number of distinct challenge orientations scales polynomially with the physical size of the scattering token and the precision of the alignment system.}
    \label{fig:non-integrated_alignment}
\end{figure}

A non-integrated optical PUF, such as the original optical PUF in \cite{Pappu_2001}, allows for the $(x,y)$ position and $(\theta,\phi)$ angular orientation of the laser to be changed relative to the scattering token as part of the challenge (Figure \ref{fig:non-integrated_alignment}).
Given a challenge position and angle $(x,y,\theta,\phi)$, assume that uncertainty in the alignment system causes the actual position and angle of the laser to vary by up to $\pm\ell$ and $\pm\alpha$, respectively.
Thus, in order for a particular laser orientation to correspond to a unique challenge, the positions of each challenge need to be separated by a spatial distance of at least $2\ell$ in the $x$ and $y$ directions, and by an angular distance of at least $2\alpha$ in the $\theta$ and $\phi$ directions.
Thus, if the scattering token can be illuminated over a surface area $A = L^2$, with an angle of incidence anywhere on a hemisphere ($\theta, \phi \in [0,\pi]$), the number of distinct orientations of the laser relative to the scattering token is bounded above by
\[
\# \textrm{ of distinct orientations} \leq \frac{\pi^2L^2}{16\alpha^2\ell^2},
\]
which scales polynomially in the physical size $L$ of the token, as well as in the spatial and angular precisions $1/\ell$ and $1/\alpha$ of the alignment system.

Because the position of the light source is fixed relative to the scattering token in an integrated PUF, learning the behavior of an integrated PUF is equivalent to learning the behavior of a particular challenge position and orientation of a non-integrated PUF that uses the same scattering token.
In particular, this implies that any algorithm that learns an integrated optical PUF in polynomial time can be extended to learn a non-integrated optical PUF in polynomial time simply by applying that algorithm for each of the polynomially many orientations of the non-integrated PUF.

\subsection{Linear Scattering Media}
\label{subsec:linear_learning}
A challenge to the PUF consists of a specific pattern on the LCD mask, which determines what parts of the PUF medium are illuminated by the laser (Fig. \ref{fig:PUF_schematic}).
We can describe the $j$-th pixel in a particular challenge image on the mask by a real number $b_j$ between 0 and 1 that describes what proportion of the incident radiation gets transmitted through that pixel.
A challenge $\mbf{b}$ to the PUF can then be written as a vector $\mbf{b} = (b_1,\ldots, b_N) \in [0,1]^N$, where $N$ is the number of pixels in the LCD screen.

At a given pixel in the detector, the complex amplitude $a$ of the electric field can be written as a function $a(\mbf{b})$ If the PUF medium is linear, $a(\mbf{b})$ can be written as a linear function
\[
a(\mbf{b}) \,=\, a(b_1,\ldots,b_N) \,=\, \sum_{j=1}^Nb_jt_j,
\]
where the $t_j$ are complex transmission coefficients that encode how the amplitude and phase of the light passing through pixels $b_j$ is transmitted to that part of the detector.
If the speckle pattern is picked up with a charge-coupled device (CCD) or a similar camera chip, then the response $f_{\msf{PUF}}(\mbf{b})$ measures the intensity $|a|^2$ of the laser light at that location, so it is quadratic in the $b_j$:
\[
f_{\msf{PUF}}(\mbf{b})\,=\, |a(\mbf{b})|^2 \,=\, \sum_{j=1}^N\sum_{k=1}^N b_jb_kt_jt_k^*,
\]
where $t_k^*$ denotes the complex conjugate of $t_k$.
We can define the new vectors $\mbf{c}=(1,c_1,\ldots,c_n)\in[0,1]^{n+1}$ and $\mbf{s}=(s_0,s_1,\ldots,s_n)\in[0,1]^{n+1}$ such that $f_{\msf{PUF}}(\mbf{c}) = \ang{\mbf{c},\mbf{s}}$, where each component $c_i$ is a monomial of total degree at most $2$ in the $b_j$, and where the first component in $\mbf{c}$ and $\mbf{s}$ representing a constant offset.
For an adversary to successfully learn the PUF, they will need to determine an approximate candidate vector $\hat{\mbf{s}}$ such that $|\ang{\mbf{c},\mbf{s}} - \ang{\mbf{c}, \hat{\mbf{s}}}|<\ve$.
In other words, they want to be able to approximate the PUF's behavior to within $\ve$ for any possible challenge $\mbf{c}$.

The problem of learning the PUF can thus be written as a problem of determining $\hat{\mbf{s}}$ from noisy CRPs.
For any given challenge $\mbf{c}_i$, the adversary will have access to the pair $(\mbf{c}_i, \ang{\mbf{c}_i,\mbf{s}} + e_i)$, where without loss of generality, $e_i$ is a $\tau_e$-subgaussian random noise term, which could, for example, arise from random measurement error or random fluctuations in the transparency of the pixels in the LCD.
If the measurement noise $e_i$ has nonzero mean, then that will show up in the $s_0$ constant term, which we can throw out at the end.
If the noise is sampled from a distribution with unbounded support, we can choose to reject samples with too large of noise.
In particular, given $\alpha>0$ such that $\Pr[|e_i|<\alpha]>1/2$, we can reject samples that we know have $|e_i|>\alpha$ and then use the same analysis as for distributions with bounded support.
In this case we will need (with overwhelming probability) around twice as many CRPs as we would otherwise, and Eqs. \ref{eq:puf_learnability}--\ref{eq:nl_time_complexity} will all pick up an extra factor of $M$ since for a given challenge $\mbf{c}_i$, the error $|e_i|$ may not be simultaneously less than $\alpha$ across all $M$ pixels in the CCD.

We can express a PUF response $r_i$ as
\[
r_i = \ang{\mbf{c}_i,\mbf{s}} + e_i,
\]
and we can combine the expressions for a set of $m$ CRPs to get
\[
\mbf{Cs}+\mbf{e}=\mbf{r},
\]
where $\mbf{c}_i$ is the $i$-th row of the $m\times n$ matrix $\mbf{C}$, and likewise for the error and response vectors $\mbf{e}$ and $\mbf{r}$.
While the pairs $(\mbf{c}_i, r_i)$ appear to be similar to samples generated for LWE or CLWE, they are are not subject to modular reduction, which removes key information about the $r_i$ samples that can otherwise be leveraged to learn $\mbf{s}$, as described in \cite{BDE+_2018}.

In order to learn $\mbf{s}$, we produce an estimate $\hat{\mbf{s}}$ ignores the error vector $\mbf{e}$ such that $\mbf{C}\hat{\mbf{s}}\approx\mbf{r}$.
Assuming that $\mbf{C}^{\msf{T}}\mbf{C}$ is invertible (and we will provide a condition for this to be true), this is done by solving for $\hat{\mbf{s}}$, giving the least-squares estimate
\[
\hat{\mbf{s}} = \left(\mbf{C}^{\msf{T}}\mbf{C}\right)^{-1}\mbf{C}^{\msf{T}}\mbf{r}.
\]
Once we have our estimate, we can now bound the estimation error $\ve$ between a legitimate PUF response $\ang{\mbf{c},\mbf{s}}$ and the approximate PUF response $\ang{\mbf{c},\hat{\mbf{s}}}$.
Since $\mbf{Cs}+\mbf{e}=\mbf{r}$, we get the relation
\begin{equation}
\label{eq:error_from_secret_diff}
    \hat{\mbf{s}}-\mbf{s} = \left(\mbf{C}^{\msf{T}}\mbf{C}\right)^{-1}\mbf{C}^{\msf{T}}\mbf{e},
\end{equation}
which by Lemma \ref{lem:subgaussian_matrix} is a $\tau'$-subgaussian random vector, where
\[
\tau'=\tau_e\cdot\opnorm{\left(\mbf{C}^{\msf{T}}\mbf{C}\right)^{-1}\mbf{C}^{\msf{T}}} = \tau_e\cdot\opnorm{\mbf{M}},
\]
where $\mbf{M} = \left(\mbf{C}^{\msf{T}}\mbf{C}\right)^{-1}\mbf{C}^{\msf{T}}$.
By Eq. \ref{eq:opnorm}, this is equal to
\begin{equation}
    \label{eq:tau_prime_def}
    \tau' = \tau_e\sqrt{\lambda_{\max}\left(\mbf{MM}^{\msf{T}}\right)} = \frac{\tau_e}{\sqrt{\lambda_{\min}\left(\mbf{C}^{\msf{T}}\mbf{C}\right)}}.
\end{equation}
The matrix $\mbf{C}^{\msf{T}}\mbf{C}$ can be written as
\[
\sum_{i=1}^m\mbf{c}_i^\msf{T}\mbf{c}_i,
\]
a sum of $m$ outer product matrices, one for each challenge.
By Lemma \ref{lem:outer_product_eigenvalues}, we can see that each of these matrices has exactly one nonzero eigenvalue equal to $\norm{\mbf{c}_i}^2$.

\begin{lem}
\label{lem:outer_product_eigenvalues}
For any row vector $\mbf{x}\in\R^n$, the eigenvalues of the outer product matrix $\mbf{x}^\msf{T}\mbf{x}$ are $\norm{\mbf{x}}^2$ and $0$.
\end{lem}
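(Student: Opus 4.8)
The plan is to exploit the fact that $\mbf{x}^\msf{T}\mbf{x}$ is a rank-one $n\times n$ matrix, so that all but one of its eigenvalues must vanish, and then to pin down the surviving eigenvalue explicitly by producing an honest eigenvector for it.

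First I would dispose of the degenerate case $\mbf{x}=\mbf{0}$: there $\mbf{x}^\msf{T}\mbf{x}$ is the zero matrix, whose only eigenvalue is $0=\norm{\mbf{x}}^2$, so the claim holds. Assume henceforth $\mbf{x}\neq\mbf{0}$. Since $\mbf{x}$ is a row vector, $\mbf{x}\mbf{x}^\msf{T}=\norm{\mbf{x}}^2$ is a (positive) scalar, and associativity of matrix multiplication gives
\[
\left(\mbf{x}^\msf{T}\mbf{x}\right)\mbf{x}^\msf{T} \;=\; \mbf{x}^\msf{T}\left(\mbf{x}\mbf{x}^\msf{T}\right) \;=\; \norm{\mbf{x}}^2\,\mbf{x}^\msf{T},
\]
so the nonzero column vector $\mbf{x}^\msf{T}$ is an eigenvector with eigenvalue $\norm{\mbf{x}}^2$. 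For the remaining eigenvalues, observe that any column vector $\mbf{v}\in\R^n$ with $\mbf{x}\mbf{v}=0$ satisfies $\left(\mbf{x}^\msf{T}\mbf{x}\right)\mbf{v}=\mbf{x}^\msf{T}\left(\mbf{x}\mbf{v}\right)=\mbf{0}$; the solution set of $\mbf{x}\mbf{v}=0$ is an $(n-1)$-dimensional subspace, so $0$ is an eigenvalue of multiplicity at least $n-1$. Counting with multiplicity, the eigenvalue $\norm{\mbf{x}}^2$ together with the $(n-1)$-fold eigenvalue $0$ already account for all $n$ eigenvalues of the $n\times n$ symmetric (hence diagonalizable) matrix $\mbf{x}^\msf{T}\mbf{x}$, so its spectrum is exactly $\{\norm{\mbf{x}}^2,\,0\}$.

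As an alternative to the dimension count in the last step, one can note $\operatorname{tr}\left(\mbf{x}^\msf{T}\mbf{x}\right)=\sum_{i=1}^n x_i^2=\norm{\mbf{x}}^2$ and that a rank-one matrix has at most one nonzero eigenvalue, which must therefore equal the trace. Either way there is no genuine obstacle here; the only point meriting a word of care is the bookkeeping that confirms no further nonzero eigenvalues hide among the $n-1$ claimed zeros, which follows immediately from rank-nullity (equivalently, from $\operatorname{rank}(\mbf{x}^\msf{T}\mbf{x})=1$).
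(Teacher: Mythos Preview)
Your proof is correct. The paper's argument is slightly different in organization: rather than exhibiting explicit eigenvectors and counting dimensions, it takes an arbitrary eigenvector $\mbf{u}$ with eigenvalue $\lambda$ and splits into the cases $\mbf{x}\mbf{u}=0$ (forcing $\lambda=0$) and $\mbf{x}\mbf{u}\neq 0$ (where left-multiplying the eigenvector equation by $\mbf{x}$ and using $\mbf{x}\mbf{x}^\msf{T}=\norm{\mbf{x}}^2$ forces $\lambda=\norm{\mbf{x}}^2$). Both routes hinge on the same scalar identity $\mbf{x}\mbf{x}^\msf{T}=\norm{\mbf{x}}^2$. Your constructive version has the minor advantage of also yielding the multiplicities (one and $n-1$) and of confirming that $\norm{\mbf{x}}^2$ is actually attained as an eigenvalue, whereas the paper's case analysis, strictly read, only shows the spectrum is contained in $\{0,\norm{\mbf{x}}^2\}$; for the paper's purposes (bounding $\lambda_{\max}$ and establishing positive semidefiniteness) that containment is all that is needed.
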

\begin{proof}
First note that if $\norm{\mbf{x}} = 0$, then $\mbf{x}^\msf{T}\mbf{x}$ is just the zero matrix, which only has eigenvalue $0$. Assume that $\norm{\mbf{x}} > 0$, and let $\mbf{u}\in \R^n$ be a nonzero eigenvector of $\mbf{x}^\msf{T}\mbf{x}$.
Then $\mbf{x}^\msf{T}\mbf{x}\mbf{u} = \lambda\mbf{u}$ for some $\lambda\in\C$.
If $\mbf{x}\mbf{u} = 0$, then we have that $\mbf{x}^\msf{T}\mbf{x}\mbf{u} = \mbf{x}^\msf{T}\cdot 0 = \mbf{0} = \lambda\mbf{u}$.
Since $\norm{\mbf{u}} > 0$, we know that $\lambda = 0$.
If $\mbf{x}\mbf{u}\neq 0$, multiplying on both sides by $\mbf{x}$ gives $\mbf{x}\mbf{x}^\msf{T}\mbf{x}\mbf{u} = \mbf{x}\lambda\mbf{u}$.
However, $\mbf{x}\mbf{x}^\msf{T} = \norm{\mbf{x}}^2$, and $\lambda$ commutes with $\mbf{x}$ on the right side giving $\norm{\mbf{x}}^2\mbf{x}\mbf{u} = \lambda\mbf{x}\mbf{u}$, from which it follows that $\lambda = \norm{\mbf{x}}^2$.
\end{proof}

Outer products of real vectors are always real and symmetric.
In addition, since none of their eigenvalues are negative by Lemma \ref{lem:outer_product_eigenvalues}, the $\mbf{c}_i^\msf{T}\mbf{c}_i$ are positive semidefinite.
The maximum eigenvalue of these matrices is $\lambda_{\max} = \norm{\mbf{c}_i}^2$.
Since $\mbf{c}$ has $n$ components, each within the interval $[0,1]$, we know that $\norm{\mbf{c}}^2\leq n$.
This combination of properties (real symmetric, positive semidefinite, and bounded maximum eigenvalue) allows us to use a matrix Chernoff bound to find a bound on the minimum eigenvalue of their sum.

\begin{prop}[Matrix Chernoff II \cite{Tropp_2011}]
\label{prop:matrix_chernoff_2}
Consider a finite sequence $\{\mbf{A}_i\}_{i=1}^m$ of independent, random, symmetric, and positive semi-definite matrices of dimension $d$ that satisfy $\lambda_{\max}(\mbf{A}_i)\leq R,$ for some $R\geq 0$.
Compute the minimum eigenvalue of the sum of expectations:
\[
\mu_{\min} := \lambda_{\min}\left(\sum_{i=1}^m \E[\mbf{A}_i]\right).
\]
Then
\[
\Pr\left[\lambda_{\min}\left(\sum_{i=1}^m \mbf{A}_i\right)\leq (1-\alpha)\mu_{\min}\right] \leq d\exp\left(-\frac{\alpha^2\mu_{\min}}{2R}\right)
\]
for all $\alpha\in [0,1]$.
\end{prop}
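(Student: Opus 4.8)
The statement is a standard matrix Chernoff bound for the minimum eigenvalue, and the plan is to reconstruct its proof via the matrix Laplace transform (Ahlswede--Winter / Tropp) method; write $\mbf{Y} := \sum_{i=1}^m\mbf{A}_i$, and note that the cases $\alpha=0$, $R=0$, or $\mu_{\min}=0$ are trivial. First I would reduce the tail bound to a trace exponential: since $\lambda_{\min}(\mbf{Y})\le t$ is the event $\lambda_{\max}(-\mbf{Y})\ge -t$ and $\exp\!\bigl(\theta\lambda_{\max}(-\mbf{Y})\bigr)\le\operatorname{tr}\exp(-\theta\mbf{Y})$ for every $\theta>0$, Markov's inequality gives
\[
\Pr\bigl[\lambda_{\min}(\mbf{Y})\le t\bigr]\;\le\;e^{\theta t}\,\E\operatorname{tr}\exp(-\theta\mbf{Y}),\qquad\theta>0.
\]

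Next, because the matrix exponential does not factor over sums, I would invoke the subadditivity of the matrix cumulant generating function,
\[
\E\operatorname{tr}\exp\!\Bigl(\textstyle\sum_i(-\theta\mbf{A}_i)\Bigr)\;\le\;\operatorname{tr}\exp\!\Bigl(\textstyle\sum_i\log\E\,e^{-\theta\mbf{A}_i}\Bigr),
\]
which follows from Lieb's concavity theorem (the concavity of $\mbf{X}\mapsto\operatorname{tr}\exp(\mbf{H}+\log\mbf{X})$) together with Jensen's inequality and the independence of the $\mbf{A}_i$. This is the one genuinely nontrivial ingredient, and it is the step I expect to be the main obstacle to a self-contained argument; the remaining steps are elementary.

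Then I would bound each factor: from $\mbf{0}\preceq\mbf{A}_i\preceq R\mbf{I}$ and the convexity of $x\mapsto e^{-\theta x}$, the scalar estimate $e^{-\theta x}\le 1+\tfrac{e^{-\theta R}-1}{R}x$ on $[0,R]$ lifts to $\E\,e^{-\theta\mbf{A}_i}\preceq\mbf{I}+\tfrac{e^{-\theta R}-1}{R}\E\mbf{A}_i$, and applying the operator-monotone logarithm followed by $\log(\mbf{I}+\mbf{X})\preceq\mbf{X}$ gives $\log\E\,e^{-\theta\mbf{A}_i}\preceq\tfrac{e^{-\theta R}-1}{R}\E\mbf{A}_i$. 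Summing over $i$, using monotonicity of the trace exponential, and using that $\tfrac{e^{-\theta R}-1}{R}<0$ for $\theta>0$ (so that $\lambda_{\max}$ of that negative scalar times $\sum_i\E\mbf{A}_i$ equals the scalar times $\lambda_{\min}(\sum_i\E\mbf{A}_i)=\mu_{\min}$), the trace exponential is at most $d\exp\!\bigl(\tfrac{e^{-\theta R}-1}{R}\mu_{\min}\bigr)$.

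Finally, combining these steps with $t=(1-\alpha)\mu_{\min}$ yields
\[
\Pr\bigl[\lambda_{\min}(\mbf{Y})\le(1-\alpha)\mu_{\min}\bigr]\;\le\;d\exp\!\Bigl(\mu_{\min}\bigl[\theta(1-\alpha)+\tfrac{e^{-\theta R}-1}{R}\bigr]\Bigr),
\]
and I would optimize by setting $\theta=-\tfrac1R\log(1-\alpha)>0$, which makes the bracket equal $\tfrac1R\bigl[-(1-\alpha)\log(1-\alpha)-\alpha\bigr]$, i.e.\ recovers the classical form $d\bigl[e^{-\alpha}/(1-\alpha)^{1-\alpha}\bigr]^{\mu_{\min}/R}$. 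The proof then closes with the elementary inequality $(1-\alpha)\log(1-\alpha)\ge -\alpha+\tfrac12\alpha^2$ for $\alpha\in[0,1]$ (which follows by comparing derivatives and using $\log(1-\alpha)\le-\alpha$), turning the exponent into $-\alpha^2\mu_{\min}/(2R)$ and giving the claimed bound.
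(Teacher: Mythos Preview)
Your proof is correct and follows exactly the standard Tropp matrix Laplace transform argument (Markov on the trace exponential, Lieb's concavity to decouple the summands, the convexity-based bound on each $\E\,e^{-\theta\mbf{A}_i}$, and the final $\theta$-optimization followed by the scalar inequality $(1-\alpha)\log(1-\alpha)\ge -\alpha+\tfrac12\alpha^2$). There is nothing to compare against, however: the paper does not prove this proposition at all --- it is simply quoted from \cite{Tropp_2011} as a black-box tool and then applied with $\alpha=1/2$ to obtain Eq.~\ref{eq:prob_bound}. So your write-up supplies strictly more than the paper does here; the ``paper's own proof'' is just the citation.
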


To determine $\mu_{\min}$, first note that since all the $\mbf{c}$ are identically and independently distributed, their expectation is the same.
Thus, we have that
\begin{equation}
    \label{eq:mu_min_def}
    \mu_{\min} = \lambda_{\min}\left(\sum_{i=1}^m\E\left[\mbf{c}^{\msf{T}}\mbf{c}\right]\right) = m\cdot \lambda_{\min}\left(\E[\mbf{c}^{\msf{T}}\mbf{c}]\right).
\end{equation}
Since $\E[\mbf{c}^{\msf{T}}\mbf{c}]$ is a real symmetric matrix, by the spectral theorem there exists an orthogonal matrix $\mbf{P}$ such that $\mbf{P}^{\msf{T}}\E[\mbf{c}^{\msf{T}}\mbf{c}]\mbf{P}$ is diagonal.
Since the expectation operator is linear, this means that $\E\left[(\mbf{cP})^\msf{T}\mbf{cP}\right]$ is diagonal, and that the eigenvalues of $\E[\mbf{c}^{\msf{T}}\mbf{c}]$ are $\lambda_j = \E[(\mbf{cP})_j^2]$.
Using $\mbf{P}$, we can rewrite an individual response $r_i$ as
\[
r_i = \ang{\mbf{c}_i\mbf{P},\mbf{P}^{\msf{T}}\mbf{s}} + e_i,
\]
with the matrix expression for $m$ responses
\[
\mbf{r} = \mbf{CPP}^{\msf{T}}\mbf{s} + \mbf{e}.
\]
If there exists some $j$ such that $\lambda_j=0$, then for any challenge $\mbf{c}_i$, the component $(\mbf{c}_i\mbf{P})_j = 0$, meaning that $f_{\msf{PUF}}$ is independent of the specific value of the $j$-th component of $\mbf{P}^{\msf{T}}\mbf{s}$.
Thus, we can instead work with the challenges $\tilde{\mbf{c}}_i = \mbf{c}_i\mbf{P}$ and $\tilde{\mbf{s}} = \mbf{P}^{\msf{T}}\mbf{s}$, where the $j$-th components corresponding to eigenvalues $\lambda_j=0$ are removed.
Let $\tilde{\mbf{C}}$ be the matrix with $j$-th row $\tilde{\mbf{c}}_j$, and compute the estimate $\hat{\tilde{\mbf{s}}}$ by taking
\[
\hat{\tilde{\mbf{s}}} = \left(\tilde{\mbf{C}}^{\msf{T}}\tilde{\mbf{C}}\right)^{-1}\tilde{\mbf{C}}^{\msf{T}}\mbf{r}.
\]
After obtaining $\hat{\tilde{\mbf{s}}}$, we can replace the removed indices $\hat{\tilde{s}}_j$ with any number and left multiply by $\mbf{P}$ to obtain $\hat{\mbf{s}}$ as before, where for any challenge $\mbf{c}$, we have $\ang{\mbf{c},\mbf{s}}=\ang{\tilde{\mbf{c}}, \tilde{\mbf{s}}}$, and likewise for the estimate.
By switching to using $\tilde{\mbf{c}}_i$, we can ensure that the expected outer product is diagonal and has a nonzero minimum eigenvalue.
Since the eigenvalues of orthogonal matrices all have modulus 1, and since $\tilde{\mbf{c}}$ has at most as many components as $\mbf{c}$, we can still fix $R=n$ since $\norm{\tilde{\mbf{c}}}\leq \norm{\mbf{c}}$.
Let $\xi = \lambda_{\min}\left(\E[\tilde{\mbf{c}}^{\msf{T}}\tilde{\mbf{c}}]\right)$ such that $\mu_{\min} = m\xi$ in Eq. \ref{eq:mu_min_def}.

Setting $\alpha = 1/2$ in Proposition \ref{prop:matrix_chernoff_2}, we can bound the minimum eigenvalue of $\tilde{\mbf{C}}^{\msf{T}}\tilde{\mbf{C}}$ by
\begin{equation}
    \label{eq:prob_bound}
    \Pr\left[\lambda_{\min}\left(\tilde{\mbf{C}}^{\msf{T}}\tilde{\mbf{C}}\right)\leq \frac{m\xi}{2}\right]\leq n\exp\left(-\frac{m\xi}{8n}\right).
\end{equation}
If we want to pick $m$ such that the probability in Eq. \ref{eq:prob_bound} is less than or equal to $\exp(-\eta)$, for $\eta>0$, then it suffices to pick $m$ such that
\begin{equation}
    \label{eq:m_bound_1-1}
    m\geq \frac{8n}{\xi}(\eta + \ln n).
\end{equation}
So, if Eq. \ref{eq:m_bound_1-1} is satisfied, we know that $\tilde{\mbf{C}}^{\msf{T}}\tilde{\mbf{C}}$ is invertible, and we have from Eq. \ref{eq:tau_prime_def} that, with probability at least $1-\exp(-\eta)$,
\[
\tau' = \tau_e\sqrt{\frac{2}{m\xi}}.
\]
In this case, by Lemma \ref{lem:subgaussian_vector_bounds}, we have that
\begin{equation}
    \label{eq:prob_bound_2}
    \Pr\left[\norm{\tilde{\mbf{s}}-\hat{\tilde{\mbf{s}}}}\geq \frac{\ve}{\sqrt{n}} \right]\leq 2n\exp\left(-\frac{\ve^2m\xi}{4n^2\tau_e^2}\right).
\end{equation}
If we pick $m$ such that the probability in Eq. \ref{eq:prob_bound_2} is less than or equal to $\exp(-\eta)$, then it suffices to pick $m$ such that
\begin{equation}
    \label{eq:m_bound_1-2}
    m\geq \frac{4n^2\tau_e^2}{\ve^2\xi}(\eta + \ln(2n)).
\end{equation}
Taking Eqs. \ref{eq:m_bound_1-1} and \ref{eq:m_bound_1-2} into account, we can see that if we set
\begin{equation}
    \label{eq:m_bound_1}
    m \geq \max\left\{\frac{8n}{\xi}(\eta + \ln n),\, \frac{4n^2\tau_e^2}{\ve^2\xi}(\eta + \ln(2n))\right\},
\end{equation}
then we know that, for any challenge $\mbf{c}\in[0,1]^n$,
\[
\left|\ang{\mbf{c},\mbf{s}} - \ang{\mbf{c}, \hat{\mbf{s}}}\right| = \left|\ang{\tilde{\mbf{c}}, \tilde{\mbf{s}}-\hat{\tilde{\mbf{s}}}}\right|\leq\norm{\tilde{\mbf{c}}}\norm{\tilde{\mbf{s}}-\hat{\tilde{\mbf{s}}}} \leq \ve.
\]
Thus, $\left|\ang{\mbf{c},\hat{\mbf{s}}}-f_{\msf{PUF}}\right|\leq\ve$, with probability at least $(1-\exp(-\eta))^2$.
Thus, the probability of simultaneously predict $f_{\msf{PUF}}$ to within $\ve$ for all $M$ pixels in the CCD is at least $(1-\exp(-\eta))^{2M}$.
If we want to achieve a good estimate with probability at least $1-\delta$, for $\delta\in (0,1)$, then since 
\[
(1-\exp(-\eta))^{2M}\geq 1-2M\exp(-\eta)
\]
for all $\eta > 0$, then to have $(1-\exp(-\eta))^{2M}\geq 1-\delta$, it suffices to fix
\[
\eta \geq \ln\left(\frac{2M}{\delta}\right).
\]
Substituting this value of $\eta$ into Eq. \ref{eq:m_bound_1} implies that it suffices to fix
\begin{equation}
    \label{eq:m_bound_2}
    m \geq \max\left\{\frac{8n}{\xi}\ln\left(\frac{2Mn}{\delta}\right),\, \frac{4n^2\tau_e^2}{\ve^2\xi}\ln\left(\frac{4Mn}{\delta}\right)\right\}.
\end{equation}
Since $n = O(N^2)$, Eq. \ref{eq:m_bound_2} gives an asymptotic bound on the required number of CRPs of
\begin{equation}
    \label{eq:puf_learnability}
    m = O\left(\frac{N^4\tau_e^2}{\ve^2\xi}\ln\left(\frac{MN^2}{\delta}\right)\right).
\end{equation}
In order to obtain $\hat{\mbf{s}}$, we need to compute the product
\[
\hat{\tilde{\mbf{s}}} = \left(\tilde{\mbf{C}}^{\msf{T}}\tilde{\mbf{C}}\right)^{-1}\tilde{\mbf{C}}^{\msf{T}}\mbf{r},
\]
which has time complexity $O(n^2m)$ with basic matrix multiplication.
Computation of the inverse $\left(\tilde{\mbf{C}}^{\msf{T}}\tilde{\mbf{C}}\right)^{-1}$ requires $O(n^3)$ time using Gaussian elimination, as does diagonalization of $\E[\mbf{c}^\msf{T}\mbf{c}]$ using a singular value decomposition \cite{HGI_2007}.
Thus, the overall time complexity for learning the PUF for all $M$ pixels in the speckle pattern is asymptotically given by
\begin{equation}
    \label{eq:puf_time_complexity}
    O\left(\frac{N^8\tau_e^2}{\ve^2\xi}\ln\left(\frac{MN^2}{\delta}\right)\right),
\end{equation}
which is polynomially bounded in $N$, $M$, $\ve$, and $\delta$.
In particular, this means that the PUF is efficiently PAC-learnable if it uses linear scattering media.

It should be noted that the approach here cannot be used to solve appropriately implemented instances of LWE or CLWE.
In particular, from Eq. \ref{eq:error_from_secret_diff}, we can see that the difference between the actual value for the secret $\mbf{s}$ and the least-squares estimate $\hat{\mbf{s}}$ multiplies the error by $\left(\mbf{C}^{\msf{T}}\mbf{C}\right)^{-1}\mbf{C}^{\msf{T}}$.
Because in LWE $\mbf{C}$ is sampled uniformly from $\Z_p^{m\times n}$, and all operations in LWE take place in $\Z_p$, this acts to magnify the error vector $\mbf{e}$, which leads to $\hat{\mbf{s}}-\mbf{s}$ being distributed according to very wide Gaussian distribution.
When reduced $\mod p$, this distribution becomes computationallly indistinguishable from the uniform distribution on $\Z_p$ \cite{Regev_2005}.
It is also clear that this approach cannot be applied CLWE since multiplicative inverses in $\R/\Z$ are not well-defined, so $\left(\mbf{C}^{\msf{T}}\mbf{C}\right)^{-1}$ cannot even be computed in principle.

\subsection{Nonlinear Scattering Media}
\label{subsec:nonlinear_learning}
Because nonlinear optical effects are generally small, we will analyze the case where the PUF contains a weakly nonlinear dielectric using a perturbative approach, which assumes that the characteristic size of the nonlinear effects is much smaller than the characteristic size of the linear effects, and that terms of quadratic or higher order in the small parameters are of negligible size.
In Eq. \ref{eq:nonlinear_wave}, we will simplify by moving the factor of $\omega_0^2/c^2$ into the $\ve_k$ terms.
Suppose that $\psi=\psi_{\msf{L}}+\delta\psi_{\msf{NL}}$ can be written as a linear term $\psi_{\msf{L}}$ and a small nonlinear term $\delta\psi_{\msf{NL}}$, where $\delta\psi_{\msf{NL}}\ll \psi_{\msf{L}}$ such that $|\psi|^k\approx|\psi_\msf{L}|^k(1+k\delta\psi_{\msf{NL}}/\psi_{\msf{L}})$, and where $\psi_{\msf{L}}$ solves the linear wave equation
\[
\left[\nabla^2 + \ve_0(\mbf{r})\right]\psi_{\msf{L}}(\mbf{r}) = J_0(\mbf{r}).
\]
Further, assume that the dielectric behaves mostly linearly, with $\ve=\ve_0+\delta\ve_{\msf{NL}}$, where again $\delta\ve_{\msf{NL}}\ll\ve_0$ with small measurable nonlinear effects up to degree $d$.
Cancelling terms quadratic in the small parameters gives
\[
\ve_0+\delta\ve_{\msf{NL}}=\ve_0+\sum_{k=1}^d\delta\ve_k|\psi|^{2k}\approx\ve_0+\sum_{k=1}^d\delta\ve_k|\psi_\msf{L}|^{2k}.
\]
Substituting into Eq. \ref{eq:nonlinear_wave} and simplifying by keeping only terms at most linear in the small parameters gives an expression for $\psi_{\msf{NL}}$ in terms of powers of $\psi_{\msf{L}}$:
\[
\left[\nabla^2 + \ve_0(\mbf{r})\right]\delta\psi_{\msf{NL}}(\mbf{r}) = -\sum_{k=1}^d\delta \ve_k(\mbf{r})\psi_{\msf{L}}(\mbf{r})\left|\psi_{\msf{L}}(\mbf{r})\right|^{2k}
\]
As we saw in the linear case, $\psi_{\msf{L}}$ can be written as a complex linear combination of the coefficients $b_j$.
Because $\psi$ is linear in the $b_j$, $|\psi_\msf{L}|^{2k}$ is a polynomial of degree $2k$ in the $b_j$, meaning that $\psi$ is a polynomial of degree $2d+1$ in the $b_j$.
Thus, $f_{\msf{PUF}}\approx |\psi|^2$ is a polynomial of degree $4d+2$ in the $b_j$.
From here, we can follow the same procedure as in the linear case by encoding the challenge vector $\mbf{c}$ which has $n=O(N^{4d+2})$ components, each of which is a monomial of total degree at most $4d+2$ in the $b_j$.
We can use the same bounds as before to get an asymptotic bound on the required number of CRPs of
\begin{equation}
    \label{eq:nl_sample_complexity}
    m = O\left(\frac{N^{8d+4}\tau_e^2}{\ve^2\xi}\ln\left(\frac{MN^{4d+2}}{\delta}\right)\right),
\end{equation}
as well as a time complexity bound of
\begin{equation}
    \label{eq:nl_time_complexity}
    O\left(\frac{N^{16d+8}\tau_e^2}{\ve^2\xi}\ln\left(\frac{MN^{4d+2}}{\delta}\right)\right).
\end{equation}
While these bounds grow much more quickly than for the linear case, they are still polynomial for a fixed value of $d$ (generally $d=1$ or 2 \cite{Eaton_1991}), so the PUF is still efficiently PAC-learnable.

\section{Conclusion}

\subsection{Results}

In Section \ref{sec:prelim}, we examined the underlying physics of integrated optical PUFs with masks and demonstrated that, with linear optics, the PUF acts as a quadratic polynomial of the challenge components $b_i$.
We introduced the PAC-learning framework, under which the task of learning the behavior of PUF in the presence of random noise, is equivalent to the problem of learning a noisy linear system in $O(N^2)$ dimensions.
By making this reduction, we were able to show in Section \ref{subsec:linear_learning} the convergence of a linear regression algorithm, based on mild assumptions about the noise distribution.
We found an asymptotic bound in Eq. \ref{eq:puf_learnability} for the number of CRPs required to learn the PUF behavior, based on the size $N$ of the LCD mask, the number of pixels $M$ in the speckle pattern detector, the accepted error $\ve$ in learning the PUF behavior, and the probability $1-\delta$ of learning the PUF, as well as the distributions of the challenge vectors and random sample noise.
The time complexity for a naive implementation of this algorithm was computed in Eq. \ref{eq:puf_time_complexity} to be
\[
O\left(\frac{N^8\tau_e^2}{\ve^2\xi}\ln\left(\frac{MN^2}{\delta}\right)\right).
\]
In particular, this means that optical PUFs with linear optics are efficiently PAC-learnable since they can be represented exactly by a polynomial.
Finally, in Section \ref{subsec:nonlinear_learning} we did a perturbative analysis of PUF designs containing dielectrics with nonlinear optical properties.
We showed that, under the assumption that the nonlinear effects were relatively small, the PUF still acts as a polynomial in the challenge components $b_i$, with the degree of the polynomial determined by the highest order of polarization susceptibility, and thus can be learned with access to polynomially many CRPs in polynomial time (Eqs. \ref{eq:nl_sample_complexity}, \ref{eq:nl_time_complexity}).

Since the computational complexity of the regression algorithm is polynomial, learning the PUF is not hard for an adversary with polynomially-bounded computational resources who has access to the challenges and noisy speckle data.
While the bounds given in Eqs. \ref{eq:puf_learnability}--\ref{eq:nl_time_complexity} grow very quickly with $N$, it should be noted they are generic polynomial bounds for a particular type of learning algorithm and are just intended to show that the optical PUFs considered are PAC-learnable with a polynomial sample and time complexity.
A more sophisticated analysis of the linear regression algorithm may provide tighter bounds, and more sophisticated learning approaches would likely require a much smaller sample set to learn the PUF in less time.

\subsection{Future Work}

In order for an integrated or non-integrated optical PUF to be plausibly secure against these types of adversaries, it cannot just use linear or weakly nonlinear scattering media.
To increase security, the raw speckle patterns could be cryptographically hashed, although this approach is susceptible to side-channel attacks if an adversary can avoid the hashing operation to access the raw speckle patterns.
In order to maintain security while avoiding a post-processing step, different PUF architectures or materials need to be used.
If alignment of the optical tokens is not an issue, the non-integrated optical PUFs described in \cite{Pappu_2001, PRTG_2002} were shown to be resilient to machine learning attacks by Support Vector Machines with linear kernels in \cite{RHU+_2013}.
However, the total number of CRPs in non-integrated optical PUFs only scales polynomially with the PUF size and alignment precision, which permits polynomial time read-out attacks, though such attacks may not be practically feasible due to limited read-out speed when aligning the PUF scattering tokens \cite{Pappu_2001}.

One possible approach that retains the integrated design is to dope the scatterers in linear optical systems with ``quantum dot'' materials such as those described in \cite{Eaton_1991}.
These are nanoparticles of semiconductor material that exhibit strong nonlinear properties at low light intensities.
Nonlinear optical systems are harder to model than linear systems since Eq. \ref{eq:nonlinear_wave}, the nonlinear wave equation governing the behavior of these systems, requires higher degree polynomials to approximate, making the task of learning the system much more difficult.
In addition, increasing the power of the laser will also increase the strength of the nonlinear effects and make the higher-order nonlinear terms more relevant, again increasing the required degree of a polynomial approximation.
Furthermore, if the nonlinear optical effects are comparable in size to the linear ones, the perturbative technique used in Section \ref{subsec:nonlinear_learning} is no longer applicable, meaning the PUF may be much harder to learn.

Another option is to use nonlinear materials that are not centrosymmetric such that their scattering properties are dependent on the polarization of the light passing through them \cite{Eaton_1991, new_2011}.
Because the dielectric constants of such materials are dependent on orientation, one must treat the electric field within the material as the laser propagates as a full vector field instead of a scalar field.
Furthermore, when using nonlinear non-centrosymmetric media, the perturbative technique in Section \ref{subsec:nonlinear_learning} gives an expression for the nonlinear term which contains a square root of a polynomial, meaning it cannot be reduced to a high degree linear system in the monomial terms like it could with isotropic materials.

In an ideal PUF design, one would embed a general case of an appropriately parameterized cryptographically hard problem within the PUF's behavior.
This approach is partially used in the Lattice PUF \cite{WXO_2019}; however all of the arithmetic required to implement such a cryptographic protocol should ideally be performed physically within the PUF structure itself, rather than just using the PUF to store a secret key.
If a PUF framework is designed with this methodology, in order for an adversary to learn an instance of the PUF, they need to solve a general case of the cryptographic hard problem.
Thus, either the adversary’s learning attack cannot run in polynomial time (as that would provide a general polynomial time solution to the cryptographic problem) or the hardness assumptions for that problem cannot hold.
In order to embed LWE or CLWE in an optical PUF, one would need to perform modular arithmetic operations directly within the optical system, which requires further research.
Modular reduction could also be achieved in a post-processing step; however any post-processing step opens up opportunities for side-channel attacks if an adversary can avoid it.

\section{Acknowledgements}
This research was supported by the Information Science and Technology Institute, the Nuclear Weapons Cyber Assurance Laboratory (NWCAL), and the Laboratory Directed Research and Development program of Los Alamos National Laboratory (LANL) under project numbers 20210529CR-IST and 20220800DI.
LANL is operated by Triad National Security, LLC, for the National Nuclear Security Administration of the U.S. Department of Energy (Contract No. 89233218CNA000001).
Approved for unlimited public release: LA-UR-23-29622.

\bibliographystyle{ieeetr}
\bibliography{refs.bib}

\begin{thebibliography}{10}

\bibitem{Pappu_2001}
P.~S. Ravikanth, {\em Physical One-Way Functions}.
\newblock PhD thesis, Massachusetts Institute of Technology, 2001.

\bibitem{CDDD_2002}
B.~Gassend, D.~Clarke, M.~van Dijk, and S.~Devadas, ``Controlled physical
  random functions,'' in {\em 18th Annual Computer Security Applications
  Conference, 2002. Proceedings.}, pp.~149--160, 2002.

\bibitem{SD_2007}
G.~E. Suh and S.~Devadas, ``Physical unclonable functions for device
  authentication and secret key generation,'' in {\em Proceedings of the 44th
  annual design automation conference}, pp.~9--14, 2007.

\bibitem{MBW+_2019}
T.~McGrath, I.~E. Bagci, Z.~M. Wang, U.~Roedig, and R.~J. Young, ``A {PUF}
  taxonomy,'' {\em Applied Physics Reviews}, vol.~6, no.~1, p.~011303, 2019.

\bibitem{GCD+_2002}
B.~Gassend, D.~Clarke, M.~Van~Dijk, and S.~Devadas, ``Silicon physical random
  functions,'' in {\em Proceedings of the 9th ACM Conference on Computer and
  Communications Security}, pp.~148--160, 2002.

\bibitem{LDG+_2004}
J.~Lee, D.~Lim, B.~Gassend, G.~Suh, M.~van Dijk, and S.~Devadas, ``A technique
  to build a secret key in integrated circuits for identification and
  authentication applications,'' in {\em 2004 Symposium on VLSI Circuits.
  Digest of Technical Papers (IEEE Cat. No.04CH37525)}, pp.~176--179, 2004.

\bibitem{BNCF_2014}
L.~Bossuet, X.~T. Ngo, Z.~Cherif, and V.~Fischer, ``A {PUF} based on a
  transient effect ring oscillator and insensitive to locking phenomenon,''
  {\em IEEE Transactions on Emerging Topics in Computing}, vol.~2, no.~1,
  pp.~30--36, 2014.

\bibitem{GKST_2007}
J.~Guajardo, S.~S. Kumar, G.-J. Schrijen, and P.~Tuyls, ``{FPGA} intrinsic
  {PUFs} and their use for {IP} protection,'' in {\em Cryptographic Hardware
  and Embedded Systems - CHES 2007} (P.~Paillier and I.~Verbauwhede, eds.),
  (Berlin, Heidelberg), pp.~63--80, Springer Berlin Heidelberg, 2007.

\bibitem{HBF_2008}
D.~E. Holcomb, W.~P. Burleson, and K.~Fu, ``Power-up {SRAM} state as an
  identifying fingerprint and source of true random numbers,'' {\em IEEE
  Transactions on Computers}, vol.~58, no.~9, pp.~1198--1210, 2008.

\bibitem{MTV_2008}
R.~Maes, P.~Tuyls, and I.~Verbauwhede, ``Intrinsic {PUFs} from flip-flops on
  reconfigurable devices,'' in {\em 3rd Benelux workshop on information and
  system security (WISSec 2008)}, vol.~17, p.~2008, Citeseer, 2008.

\bibitem{DLG+_2005}
D.~Lim, J.~Lee, B.~Gassend, G.~Suh, M.~van Dijk, and S.~Devadas, ``Extracting
  secret keys from integrated circuits,'' {\em IEEE Transactions on Very Large
  Scale Integration (VLSI) Systems}, vol.~13, no.~10, pp.~1200--1205, 2005.

\bibitem{RSS+_2010}
U.~R{\"u}hrmair, F.~Sehnke, J.~S{\"o}lter, G.~Dror, S.~Devadas, and
  J.~Schmidhuber, ``Modeling attacks on physical unclonable functions,'' in
  {\em Proceedings of the 17th ACM conference on Computer and communications
  security}, pp.~237--249, 2010.

\bibitem{TLH+_2015}
S.~Tajik, H.~Lohrke, F.~Ganji, J.-P. Seifert, and C.~Boit, ``Laser fault attack
  on physically unclonable functions,'' in {\em 2015 Workshop on Fault
  Diagnosis and Tolerance in Cryptography (FDTC)}, pp.~85--96, 2015.

\bibitem{GKJ+_2015}
F.~Ganji, J.~Kr{\"a}mer, J.-P. Seifert, and S.~Tajik, ``Lattice basis reduction
  attack against physically unclonable functions,'' in {\em Proceedings of the
  22nd ACM SIGSAC Conference on Computer and Communications Security},
  pp.~1070--1080, 2015.

\bibitem{GTSS_2015}
F.~Ganji, S.~Tajik, and J.-P. Seifert, ``Why attackers win: on the learnability
  of {XOR} arbiter {PUFs},'' in {\em Trust and Trustworthy Computing: 8th
  International Conference, TRUST 2015, Heraklion, Greece, August 24-26, 2015,
  Proceedings 8}, pp.~22--39, Springer, 2015.

\bibitem{GTFS_2016}
F.~Ganji, S.~Tajik, F.~F{\"a}{\ss}ler, and J.-P. Seifert, ``Strong machine
  learning attack against {PUFs} with no mathematical model,'' in {\em
  Cryptographic Hardware and Embedded Systems--CHES 2016: 18th International
  Conference, Santa Barbara, CA, USA, August 17-19, 2016, Proceedings 18},
  pp.~391--411, Springer, 2016.

\bibitem{GTS_2016}
F.~Ganji, S.~Tajik, and J.-P. Seifert, ``{PAC} learning of arbiter {PUFs},''
  {\em Journal of Cryptographic Engineering}, vol.~6, pp.~249--258, 2016.

\bibitem{GTFS_2017}
F.~Ganji, S.~Tajik, F.~F{\"a}{\ss}ler, and J.-P. Seifert, ``Having no
  mathematical model may not secure {PUFs},'' {\em Journal of Cryptographic
  Engineering}, vol.~7, pp.~113--128, 2017.

\bibitem{Ganji_2018}
F.~Ganji, {\em On the learnability of physically unclonable functions}.
\newblock Springer, 2018.

\bibitem{CMH_2020}
D.~Chatterjee, D.~Mukhopadhyay, and A.~Hazra, ``Interpose puf can be pac
  learned.'' Cryptology ePrint Archive, Paper 2020/471, 2020.
\newblock \url{https://eprint.iacr.org/2020/471}.

\bibitem{HBNS_2013}
C.~Helfmeier, C.~Boit, D.~Nedospasov, and J.-P. Seifert, ``Cloning physically
  unclonable functions,'' in {\em 2013 IEEE International Symposium on
  Hardware-Oriented Security and Trust (HOST)}, pp.~1--6, 2013.

\bibitem{PRTG_2002}
R.~Pappu, B.~Recht, J.~Taylor, and N.~Gershenfeld, ``Physical one-way
  functions,'' {\em Science}, vol.~297, no.~5589, pp.~2026--2030, 2002.

\bibitem{RHU+_2013}
U.~R{\"u}hrmair, C.~Hilgers, S.~Urban, A.~Weiersh{\"a}user, E.~Dinter,
  B.~Forster, and C.~Jirauschek, ``Optical {PUF}s reloaded,'' {\em Cryptology
  ePrint Archive}, 2013.

\bibitem{SM_2000}
S.~E. Skipetrov and R.~Maynard, ``Instabilities of waves in nonlinear
  disordered media,'' {\em Phys. Rev. Lett.}, vol.~85, pp.~736--739, Jul 2000.

\bibitem{BDE+_2018}
J.~Bootle, C.~Delaplace, T.~Espitau, P.-A. Fouque, and M.~Tibouchi, ``{LWE}
  without modular reduction and improved side-channel attacks against
  {BLISS},'' in {\em Advances in Cryptology -- ASIACRYPT 2018} (T.~Peyrin and
  S.~Galbraith, eds.), (Cham), pp.~494--524, Springer International Publishing,
  2018.

\bibitem{Nikolopoulos_2022}
G.~M. Nikolopoulos, ``Effects of kerr nonlinearity in physical unclonable
  functions,'' {\em Applied Sciences}, vol.~12, no.~23, p.~11985, 2022.

\bibitem{BLM_2013}
S.~Boucheron, G.~Lugosi, and P.~Massart, {\em {Concentration Inequalities: A
  Nonasymptotic Theory of Independence}}.
\newblock Oxford University Press, 02 2013.

\bibitem{new_2011}
G.~New, {\em Introduction to Nonlinear Optics}.
\newblock Cambridge University Press, 2011.

\bibitem{Eaton_1991}
D.~F. Eaton, ``Nonlinear optical materials,'' {\em Science}, vol.~253,
  no.~5017, pp.~281--287, 1991.

\bibitem{Regev_2005}
O.~Regev, ``On lattices, learning with errors, random linear codes, and
  cryptography,'' in {\em Proceedings of the Thirty-Seventh Annual ACM
  Symposium on Theory of Computing}, STOC '05, (New York, NY, USA), p.~84–93,
  Association for Computing Machinery, 2005.

\bibitem{LPV_2013}
V.~Lyubashevsky, C.~Peikert, and O.~Regev, ``On ideal lattices and learning
  with errors over rings,'' {\em J. ACM}, vol.~60, nov 2013.

\bibitem{BNM+_2022}
A.~Bogdanov, M.~C. Noval, C.~Hoffmann, and A.~Rosen, ``Public-key encryption
  from continuous {LWE}.,'' {\em IACR Cryptol. ePrint Arch.}, vol.~2022, p.~93,
  2022.

\bibitem{BRST_2021}
J.~Bruna, O.~Regev, M.~J. Song, and Y.~Tang, ``Continuous {LWE},'' in {\em
  Proceedings of the 53rd Annual ACM SIGACT Symposium on Theory of Computing},
  STOC 2021, (New York, NY, USA), p.~694–707, Association for Computing
  Machinery, 2021.

\bibitem{GVV_2022}
A.~Gupte, N.~Vafa, and V.~Vaikuntanathan, ``Continuous {LWE} is as hard as
  {LWE} \& applications to learning gaussian mixtures,'' {\em arXiv preprint
  arXiv:2204.02550}, 2022.

\bibitem{Valiant_1984}
L.~G. Valiant, ``A theory of the learnable,'' {\em Communications of the ACM},
  vol.~27, no.~11, pp.~1134--1142, 1984.

\bibitem{MRT_2018}
M.~Mohri, A.~Rostamizadeh, and A.~Talwalkar, {\em Foundations of machine
  learning}.
\newblock MIT press, 2018.

\bibitem{Tropp_2011}
J.~A. Tropp, ``User-friendly tail bounds for sums of random matrices,'' {\em
  Foundations of Computational Mathematics}, vol.~12, pp.~389--434, aug 2011.

\bibitem{HGI_2007}
M.~Holmes, A.~Gray, and C.~Isbell, ``Fast {SVD} for large-scale matrices,'' in
  {\em Workshop on Efficient Machine Learning at NIPS}, vol.~58, pp.~249--252,
  2007.

\bibitem{WXO_2019}
Y.~Wang, X.~Xi, and M.~Orshansky, ``Lattice {PUF}: A strong physical unclonable
  function provably secure against machine learning attacks,'' 2019.

\end{thebibliography}

\begin{IEEEbiographynophoto}{Apollo Albright}
Apollo Albright is completing his undergraduate studies at Reed College in Portland, Oregon, USA, where he is majoring in mathematics and physics.
He is also an undergraduate research associate with the Analytics, Intelligence, and Technology Division of Los Alamos National Laboratory.
His research interests include classical and post-quantum cryptography, combinatorics, graph theory, and quantum and many-body physics. 
\end{IEEEbiographynophoto}

\begin{IEEEbiographynophoto}{Boris Gelfand}
Dr. Gelfand is a security researcher and systems engineer at Los Alamos National Labs and has many years' experience working as a contractor with DoD, DOE, and the IC.
Notably he was the chief designer and architect of the National Cyber Range and has been the PI of advanced research programs including many from DARPA.
He holds a PhD in computer science, as well as degrees in mathematics and physics.
Prior to coming to Los Alamos, he worked for Lockheed Martin in the Advance Technologies Laboratory.
\end{IEEEbiographynophoto}

\begin{IEEEbiographynophoto}{Michael Dixon}
Michael J. Dixon is a senior cyber security research scientist and principal investigator in LANL’s Advanced Research in Cyber Systems group and Nuclear Weapons Cyber Assurance Laboratory specializing in applied cryptography, secure machine learning and artificial intelligence, anti-tamper technologies, and provable security using formal methods.
Michael holds a Bachelor of Science and Engineering in Computer Science from the University of Michigan, College of Engineering, and attended MIT for graduate studies as an Advanced Study Program Fellow researching post-quantum and lattice-based cryptography.
\end{IEEEbiographynophoto}

\end{document}